\documentclass[11pt]{article}

\usepackage[a4paper,margin=1in]{geometry}
\usepackage{microtype}
\usepackage{graphicx}
\usepackage{subcaption}
\usepackage{booktabs}
\usepackage[colorlinks=true,linkcolor=blue,citecolor=blue,urlcolor=blue]{hyperref}

\usepackage{amsmath,amssymb,mathtools}
\usepackage{amsthm}
\usepackage[capitalize,noabbrev]{cleveref}

\usepackage{algorithm}
\usepackage{algorithmic}

\usepackage{multirow}
\usepackage{siunitx}
\usepackage{authblk}

\usepackage[round,authoryear]{natbib}

\theoremstyle{plain}
\newtheorem{theorem}{Theorem}
\newtheorem{lemma}{Lemma}
\newtheorem{corollary}{Corollary}

\theoremstyle{definition}

\title{Sample Transform Cost-Based Training-Free Hallucination Detector for Large Language Models}

\author[1]{Zeyang Ding}
\author[1]{Xinglin Hu}
\author[1]{Jicong Fan\thanks{Corresponding author: \texttt{fanjicong@cuhk.edu.cn}}}

\affil[1]{The Chinese University of Hong Kong, Shenzhen}
\affil[ ]{\texttt{zding@cuhk.edu.cn}}
\affil[ ]{\texttt{xinglinhu@link.cuhk.edu.cn}}
\affil[ ]{\texttt{fanjicong@cuhk.edu.cn}}

\date{}

\begin{document}

\maketitle

\begin{abstract}
Hallucinations in large language models (LLMs) remain a central obstacle to trustworthy deployment, motivating detectors that are accurate, lightweight, and broadly applicable. Since an LLM with a prompt defines a conditional distribution, we argue that the complexity of the distribution is an indicator of hallucination. However, the density of the distribution is unknown and the samples (i.e., responses generated for the prompt) are discrete distributions, which leads to a significant challenge in quantifying the complexity of the distribution. We propose to compute the optimal-transport distances between the sets of token embeddings of pairwise samples, which yields a Wasserstein distance matrix measuring the costs of transforming between the samples. This Wasserstein distance matrix provides a means to quantify the complexity of the distribution defined by the LLM with the prompt. Based on the Wasserstein distance matrix, we derive two complementary signals: AvgWD, measuring the average cost, and EigenWD, measuring the cost complexity. This leads to a training-free detector for hallucinations in LLMs. We further extend the framework to black-box LLMs via teacher forcing with an accessible teacher model. Experiments show that AvgWD and EigenWD are competitive with strong uncertainty baselines and provide complementary behavior across models and datasets, highlighting distribution complexity as an effective signal for LLM truthfulness.

\end{abstract}

\section{Introduction}
Large language models (LLMs) have rapidly transformed modern artificial intelligence, enabling strong performance in open-ended dialogue, reasoning, and code generation at unprecedented scale \citep{vaswani2017attention,brown2020language,achiam2023gpt,touvron2023llama}. Despite this progress, their reliability remains undermined by \emph{hallucinations}---outputs that are fluent and plausible yet factually unsupported or incorrect \citep{maynez2020faithfulness,ji2023survey,huang2025survey}. This failure mode is particularly concerning in high-stakes settings such as healthcare, education, and scientific workflows, where incorrect but convincing answers can be difficult to detect \citep{lin2022truthfulqa}. At its core, this problem reflects the mismatch between next-token prediction and factual reliability: LLMs are trained to produce continuations that are \emph{likely} under the data distribution, not necessarily statements that are verifiable or grounded in external evidence \citep{welleck2019neural,lin2022truthfulqa}.

A practical hallucination detector should be accurate, lightweight, and broadly applicable across model access regimes. Existing approaches, however, involve clear trade-offs. \emph{External verification} and retrieval-augmented pipelines \citep{lewis2020retrieval} improve factual grounding, but require reliable external knowledge and additional system integration, and can still fail when retrieval is noisy or evidence is misused \citep{gao2023rarr}. \emph{Black-box self-consistency} methods avoid external resources by comparing multiple sampled outputs, but they operate mainly in text space and are sensitive to paraphrasing, stylistic variation, and heuristic scoring choices \citep{manakul2023selfcheckgpt}. \emph{White-box uncertainty} signals derived from logits are efficient, yet they often reflect lexical uncertainty rather than semantic or factual reliability, and may be poorly calibrated under decoding and distribution shift \citep{malinin2020uncertainty,kuhn2023semantic}.

Recent evidence suggests that hallucinations correlate with \emph{internal} inconsistencies that are more salient in representation space than in surface form. Representation-based detectors therefore exploit hidden states to capture semantic variation without relying on external modules \citep{chen2024inside,wang2025revisiting}. A central question is how to represent an entire response. Prior work often compresses a response into a single vector, such as a last-token state or pooled embedding, or relies on spectral statistics of heavily summarized features \citep{chen2024inside}. But in long-form generation, uncertainty is often distributed across many tokens---for example, entity mentions, numerical steps, and local justifications---making such compression potentially brittle.

In this paper, we instead treat multiple sampled responses to a prompt as observations from a prompt-conditioned response distribution, and argue that the \emph{complexity} of this distribution is predictive of hallucination. Because this distribution is unknown and only a finite set of discrete response samples is observed, we compare responses directly at the token level in representation space. For each sampled response, we construct an empirical measure over generated-token embeddings from an intermediate layer and compute pairwise optimal-transport (OT) distances between sampled responses. This produces a Wasserstein distance matrix whose entries quantify the cost of transforming one sampled response into another \citep{cuturi2013sinkhorn}. From this structure, we derive two complementary training-free signals: \textbf{AvgWD}, which measures average transform cost across sampled pairs, and \textbf{EigenWD}, which captures the spectral complexity of the induced cost structure. Together, they define a distribution-consistency detector for hallucination. We further extend the approach to black-box LLMs by using a teacher model to approximate hidden representations via teacher forcing on sampled outputs, preserving the same multi-sample consistency signal without access to the target model's internal states.

Our Contributions are summarized as follows.
\begin{itemize}
    \item We propose a training-free distribution-consistency framework for hallucination detection using generated-token intermediate representations.
    \item We introduce two complementary signals from the resulting Wasserstein structure: AvgWD (cost magnitude) and EigenWD (cost-structure complexity).
    \item We generalize the detector to black-box models via teacher forcing with a teacher LLM, retaining multi-sample consistency without requiring access to the target model’s hidden states.
    \item We evaluate on $5$ open-source LLMs and $4$ datasets (20 model--dataset settings) and compare against $5$ training-free baselines; our method achieves the best overall performance on average and is top-performing in many settings.
\end{itemize}

\section{Related Work}
\label{sec:related}

\subsection{Hallucination Evaluation and Detection}
Hallucination has been studied extensively in summarization and open-ended generation, where factuality and faithfulness errors can be subtle and hard to assess automatically \citep{maynez2020faithfulness,ji2023survey}. Beyond task-specific benchmarks such as TruthfulQA \citep{lin2022truthfulqa}, recent efforts curate large-scale hallucination evaluation resources for LLMs, e.g., HaluEval, which provides human-annotated hallucinated samples covering diverse topics and error types \citep{li2023halueval}. In parallel, fine-grained factuality evaluation for long-form generation has been advanced by atomic fact decomposition and evidence-backed scoring, such as FActScore \citep{min2023factscore}. While these benchmarks and evaluators are primarily designed for assessment, they influence detector design by clarifying what constitutes hallucination and how detection performance is measured.

Hallucination detection methods broadly fall into two categories. \emph{External verification} frameworks interface LLMs with retrieval and evidence attribution, and then revise or validate generations based on retrieved sources \citep{gao2023rarr}. These approaches can be effective for grounded tasks but incur system complexity and depend on the quality of retrievers and knowledge sources. \emph{Self-consistency} approaches avoid external modules by comparing multiple generations for the same prompt, based on the intuition that correct knowledge yields consistent outputs while hallucinations lead to divergence \citep{manakul2023selfcheckgpt}. Our method follows the multi-sample consistency principle but moves the comparison into representation space, aiming to reduce sensitivity to surface-form variation while also enabling structural (spectral) analysis of cross-sample discrepancies.

\subsection{Uncertainty Quantification and Self-Evaluation in LLMs}
A large body of work uses probabilistic signals (e.g., token entropy, sequence probability, or derived confidence measures) as uncertainty proxies \citep{malinin2020uncertainty,kendall2017uncertainties,gal2016dropout,lakshminarayanan2017simple}. However, natural language exhibits semantic invariances: multiple strings can express the same meaning, making purely lexical uncertainty insufficient. Semantic uncertainty methods address this by grouping generations by meaning and defining uncertainty over semantic equivalence classes (e.g., semantic entropy) \citep{kuhn2023semantic}. Complementary to these, self-evaluation studies show that LLMs can sometimes predict correctness when prompted appropriately, suggesting that models may contain internal signals about their own knowledge and uncertainty \citep{kadavath2022language}. Our approach differs in that it does not require eliciting explicit self-evaluation outputs; instead, it measures (i) the magnitude and (ii) the structural complexity of inconsistency in intermediate representations across multiple samples.

\subsection{Representation-Based Hallucination Detection}
Representation-based detectors use hidden states to capture semantic variation and uncertainty from a model’s internal computations. \citet{chen2024inside} proposed INSIDE, an eigenscore based on spectral statistics across multiple generations. The Effective Rank-based Uncertainty proposed by \citep{wang2025revisiting} further motivates spectrum-based measures over responses and layers. In contrast, we treat an LLM with a prompt as defining a conditional distribution and quantify its \emph{distribution complexity} by computing pairwise Wasserstein distances between generated-token embedding sets. 

\subsection{Black-box Settings and Teacher Forcing}
In many real deployments, internal states of proprietary LLMs are unavailable, motivating black-box detectors. Sampling-based detectors are widely applicable in black-box settings, as they rely only on model outputs. However, they remain limited to text-level signals \citep{manakul2023selfcheckgpt}. A complementary direction uses an accessible teacher model to extract features via teacher forcing on the target model’s outputs, enabling representation-aware signals even in black-box regimes \citep{sriramanan2024llm}. Our black-box extension follows this teacher-forcing paradigm while preserving the multi-sample consistency signal central to self-consistency methods, resulting in a unified framework for both access regimes.

\section{Methodology}
\label{sec:method}

We develop a training-free hallucination detector based on sample-to-sample transform costs in representation space \citep{peyre2019computational,cuturi2013sinkhorn}.
For a given prompt $x$, we draw $K$ stochastic responses and compute pairwise optimal-transport distances between their generated-token embeddings, forming a Wasserstein distance matrix that quantifies the cost of transforming one sampled response into another.
From this matrix, we derive two complementary signals: \textbf{AvgWD}, which captures the average transform cost, and \textbf{EigenWD}, which captures the complexity of the induced cost structure via spectral statistics.
Together, AvgWD and EigenWD define a training-free detector for hallucinations.

\paragraph{Notation.}
Let $\mathrm{LLM}_\theta$ denote a target LLM with parameters $\theta$. Given a prompt $x$, we draw $K$ stochastic generations
\begin{equation}
y_i \sim p_\theta(\cdot\mid x),\quad i=1,\dots,K,
\label{eq:sampling}
\end{equation}
where each response $y_i=(y_{i,1},\dots,y_{i,n_i})$ is a token sequence of length $n_i$. Fix an intermediate layer $\ell$ and let
\begin{equation}
\mathbf{z}_{i,t}^{(\ell)} \in \mathbb{R}^d
\label{eq:zit}
\end{equation}
denote the layer-$\ell$ hidden state for token position $t$ in $y_i$, where $t=1,\ldots,n_i$.

\paragraph{Generating Multiple Responses via Stochastic Decoding.}
In practice, these responses are obtained via \emph{stochastic decoding}, by sampling tokens from a temperature-scaled and optionally truncated next-token distribution. At decoding step $t$,
\begin{equation}
y_t \sim \mathrm{Cat}\!\left(\tilde{p}_\theta(\cdot \mid x, y_{<t})\right),\quad
\tilde{p}_\theta(v \mid \cdot)\propto \exp\!\left(\ell_v/\tau\right),
\end{equation}
where $\ell_v$ is the logit for token $v$ and $\tau>0$ is the temperature. Here, $p_\theta(\cdot\mid x,y_{<t})$ denotes the model's original next-token distribution, while $\tilde p_\theta(\cdot\mid x,y_{<t})$ denotes the sampling distribution after temperature scaling and optional truncation followed by renormalization. Truncation may be implemented with top-$k$ or nucleus sampling.

\paragraph{Overview.}
\Cref{fig:avgwd_overview} illustrates the overall pipeline: sample multiple responses for the same prompt, extract generated-token hidden states, compute pairwise Wasserstein discrepancies, and summarize the resulting structure either by its average magnitude (AvgWD) or by its spectral complexity (EigenWD). We summarize the white-box procedure in \Cref{alg:whitebox}.

\begin{figure*}[t]
  \centering
  \includegraphics[width=0.98\textwidth]{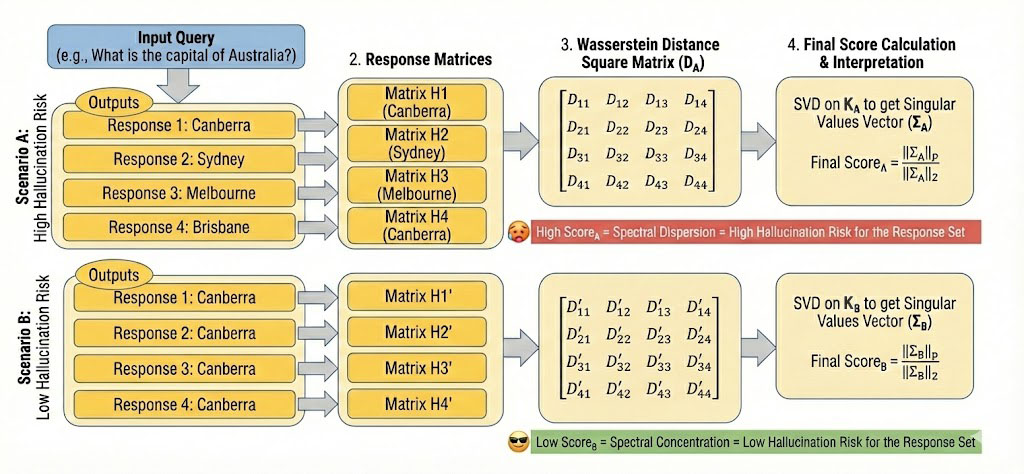}
  \caption{\textbf{Overview of distribution-consistency detection.}
  For a given prompt, we sample multiple responses, extract generated-token hidden states, compute pairwise Wasserstein distances between responses, and summarize the resulting structure using AvgWD (average pairwise transform cost) and EigenWD (spectral complexity of the induced cost matrix).}
  \label{fig:avgwd_overview}
\end{figure*}

\begin{algorithm}[t]
\caption{White-box AvgWD/EigenWD}
\label{alg:whitebox}
\begin{algorithmic}[1]
\STATE \textbf{Input:} prompt $x$; white-box LLM $\mathrm{LLM}_\theta$; layer $\ell$; number of samples $K$.
\STATE \textbf{Output:} $\mathrm{AvgWD}(x)$ and $\mathrm{EigenWD}(x)$.
\STATE Sample $K$ stochastic responses $\{y_i\}_{i=1}^K$ from $\mathrm{LLM}_\theta$ given $x$.
\FOR{$i=1$ \textbf{to} $K$}
  \STATE Extract generated-token states at layer $\ell$: $Z_i^{(\ell)}=\{\mathbf{z}_{i,t}^{(\ell)}\}_{t=1}^{m_i}$.
  \STATE Construct the empirical measure
  $\mu_i^{(\ell)} \leftarrow \frac{1}{m_i}\sum_{t=1}^{m_i}\delta_{\mathbf{z}_{i,t}^{(\ell)}}$.
\ENDFOR
\FOR{$i=1$ \textbf{to} $K$}
  \FOR{$j=i+1$ \textbf{to} $K$}
    \STATE Compute exact OT with squared $\ell_2$ ground cost and set
    $D_{ij}\leftarrow W_2(\mu_i^{(\ell)},\mu_j^{(\ell)})$.
    \STATE $D_{ji}\leftarrow D_{ij}$.
  \ENDFOR
  \STATE $D_{ii}\leftarrow 0$.
\ENDFOR
\STATE $\mathrm{AvgWD}(x)\leftarrow \frac{2}{K(K-1)}\sum_{i<j} D_{ij}$.
\STATE $\mathrm{EigenWD}(x)\leftarrow \mathrm{SpectralComplexity}(D)$ (see \Cref{subsec:eigenwd}).
\STATE \textbf{return} $\mathrm{AvgWD}(x), \mathrm{EigenWD}(x)$.
\end{algorithmic}
\end{algorithm}

\subsection{Generated-token embeddings as empirical distributions}
We represent each response by an empirical measure over its generated-token embeddings:
\begin{equation}
\mu_i^{(\ell)} \;=\; \frac{1}{m_i}\sum_{t=1}^{m_i} \delta_{\mathbf{z}_{i,t}^{(\ell)}},
\label{eq:empirical_measure}
\end{equation}
where $\delta_u$ is the Dirac measure at $u\in\mathbb{R}^d$, and $m_i$ is the number of generated tokens retained in the empirical support after preprocessing. Equivalently, $\mu_i^{(\ell)}$ is the uniform empirical distribution over the token embeddings $\{\mathbf{z}^{(\ell)}_{i,t}\}_{t=1}^{m_i}$.

\paragraph{Implementation details.}
We (i) use only the generated continuation tokens (excluding the prompt segment), (ii) exclude the EOS token from the support of $\mu_i^{(\ell)}$, and (iii) assign uniform token weights.

\paragraph{Mid-layer choice and projection.}
To reduce layer sensitivity and match the implementation, we use the middle transformer layer $\ell=\lfloor L/2\rfloor$, where $L$ is the number of layers. We also apply a fixed shared random projection to 128 dimensions to reduce OT cost while preserving the core signal.

\subsection{AvgWD: Average Sample Transform Cost as Distribution Complexity}
\label{subsec:avgwd}

Given a prompt $x$, an LLM induces a conditional distribution $p_\theta(\cdot\mid x)$ over responses.
The \emph{complexity} of this distribution is reflected by how costly it is, on average, to transport one sampled response into another in representation space.
The main difficulty is that the density of $p_\theta(\cdot\mid x)$ is unknown and we only observe a finite set of samples $\{y_i\}_{i=1}^K$, where each $y_i$ is a variable-length sequence rather than a fixed-dimensional point.
Therefore, classical complexity surrogates based on estimating moments (e.g., covariance) are not directly applicable without additional fixed-dimensional summarization.
Instead of pooling a response into a single vector (which discards token-level structure), we represent each response by an empirical distribution over its generated-token embeddings (Sec.~3.1) and measure sample transform costs between these distributions.

\paragraph{Wasserstein distance (continuous definition).}
Let $\mu$ and $\nu$ be probability measures on $\mathbb{R}^d$ with finite second moments.
The 2-Wasserstein distance is
\begin{equation}
\mathcal{W}_2(\mu,\nu)
\;:=\;
\Big(
\inf_{\gamma\in\Pi(\mu,\nu)}
\int_{\mathbb{R}^d\times\mathbb{R}^d}
\|u-v\|_2^2\, d\gamma(u,v)
\Big)^{\frac12},
\label{eq:w2_cont}
\end{equation}
where $\Pi(\mu,\nu)$ denotes the set of couplings whose marginals are $\mu$ and $\nu$ \citep{peyre2019computational}.

\paragraph{Discrete (empirical) version used in our detector.}
For each sampled response $y_i$, we form an empirical measure over its generated continuation token embeddings at layer $\ell$:
\begin{equation}
\mu_i^{(\ell)}=\frac{1}{m_i}\sum_{t=1}^{m_i}\delta_{\mathbf{z}^{(\ell)}_{i,t}},
\label{eq:emp_measure}
\end{equation}
where $m_i$ is the number of generated continuation tokens retained in the support (excluding the prompt segment and EOS), and we assign uniform token weights.
For two empirical measures $\mu_i^{(\ell)}$ and $\mu_j^{(\ell)}$, the discrete OT problem becomes a finite-dimensional linear program.
Let $\mathbf{a}\in\mathbb{R}^{m_i}$ and $\mathbf{b}\in\mathbb{R}^{m_j}$ be uniform weights, i.e., $a_t=\tfrac{1}{m_i}$ and $b_s=\tfrac{1}{m_j}$.
With the squared Euclidean ground cost $c(\mathbf{u},\mathbf{v})=\|\mathbf{u}-\mathbf{v}\|_2^2$, the squared EMD objective is
\begin{equation}
\begin{aligned}
\mathrm{EMD2}\left(\mu_i^{(\ell)},\mu_j^{(\ell)};c\right)
\;:=\;
\min_{\mathbf{P}\in\mathbb{R}_+^{m_i\times m_j}}
&\;\sum_{t=1}^{m_i}\sum_{s=1}^{m_j} P_{ts}\,
c\!\left(\mathbf{z}^{(\ell)}_{i,t},\mathbf{z}^{(\ell)}_{j,s}\right) \\
\text{s.t.}\quad
& \mathbf{P}\mathbf{1}=\mathbf{a},\qquad \mathbf{P}^\top\mathbf{1}=\mathbf{b}.
\end{aligned}
\label{eq:emd2_disc}
\end{equation}
In our implementation, we compute $\mathrm{EMD2}$ exactly using POT \texttt{emd2} and take the square root to obtain the 2-Wasserstein distance:
\begin{equation}
D_{ij}
\;:=\;
\sqrt{\mathrm{EMD2}(\mu_i^{(\ell)},\mu_j^{(\ell)};c)}.
\label{eq:Dij_def}
\end{equation}
Collecting all pairwise distances yields a symmetric matrix $\mathbf{D}\in\mathbb{R}^{K\times K}$ with $D_{ii}=0$ and $D_{ij}=D_{ji}$.

\paragraph{AvgWD as a U-statistic.}
We define \textbf{AvgWD} as the average sample transform cost across all unordered pairs:
\begin{equation}
\mathrm{AvgWD}(x)
:= \frac{2}{K(K-1)}\sum_{1\le i<j\le K} D_{ij}.
\label{eq:avgwd}
\end{equation}
Under independent draws from $p_\theta(\cdot\mid x)$, this is a standard second-order U-statistic over the $K$ samples, providing an unbiased estimator of the expected pairwise transform cost and, under standard regularity conditions, an asymptotically normal estimator as $K\to\infty$.
Intuitively, a larger AvgWD means that sampled responses are, on average, more expensive to transport into one another in representation space, indicating higher distribution complexity and hence a higher risk of hallucination.

\subsection{EigenWD: Spectral Complexity of Transform-Cost Structure}
\label{subsec:eigenwd}

AvgWD summarizes the \emph{magnitude} of sample transform costs (Sec.~\ref{subsec:avgwd}).
We further characterize the \emph{structural complexity} encoded by the full pairwise transform-cost matrix $\mathbf{D}\in\mathbb{R}^{k\times k}$.

\paragraph{From costs to a similarity structure.}
We convert $\mathbf{D}$ into a similarity (kernel) matrix $\mathbf{K}$ using a Gaussian kernel,
\begin{equation}
K_{ij}=\exp\!\left(-\frac{D_{ij}^2}{2\,(b^2+\epsilon)}\right),
\label{eq:kernel}
\end{equation}
where $\epsilon$ is a small constant for numerical stability. Following the implementation, we set the bandwidth as
\begin{equation}
b=\mathrm{median}\{D_{ij}: D_{ij}>0\},
\label{eq:bandwidth}
\end{equation}
and use $\epsilon=10^{-6}$ (defaulting to $b=1$ if no positive entry exists). To improve numerical robustness, we add a small diagonal shift,
\begin{equation}
\mathbf{K} \leftarrow \mathbf{K} + \alpha \mathbf{I},
\label{eq:kernel_stab}
\end{equation}
with a small $\alpha>0$ as used in the implementation.

\paragraph{Why kernelize before spectral analysis.}
Although $\mathbf{D}$ contains the raw transform costs, its spectrum is dominated by global scale and can be sensitive to outliers. Consequently, the spectrum of $\mathbf{D}$ is less directly interpretable as structural complexity.
In contrast, the kernelization in Eq.~\eqref{eq:kernel} maps costs into a bounded similarity structure with controlled scale via $b$, making the resulting spectrum more stable and more directly tied to how responses organize into modes (e.g., coherent clusters versus fragmented multi-modal inconsistencies).

\paragraph{EigenWD definition.}
Let the eigenvalue decomposition of $\mathbf{K}$ be
\begin{equation}
\mathbf{K}=\mathbf{V}\mathbf{\Lambda}\mathbf{V}^\top,
\end{equation}
where $\mathbf{\Lambda}=\mathrm{diag}(\lambda_1,\lambda_2,\ldots,\lambda_k)$ and $\lambda_i$ denotes the $i$-th eigenvalue of $\mathbf{K}$.
We then define
\begin{equation}
\mathrm{EigenWD}(x)=\frac{\|\boldsymbol{\lambda}\|_{p}}{\|\boldsymbol{\lambda}\|_{2}}
=\frac{\left(\sum_{i=1}^k \lambda_i^p\right)^{1/p}}{\left(\sum_{i=1}^k \lambda_i^2\right)^{1/2}},
\label{eq:eigenwd}
\end{equation}
where $0<p<2$, and $\|\cdot\|_p$ denotes the $\ell_p$ quasi-norm when $0<p<1$.
This quantity is scale-invariant because both numerator and denominator scale linearly with $\boldsymbol{\lambda}$.
For $p<2$ and nonzero $\boldsymbol{\lambda}$, we have $\mathrm{EigenWD}(x)\ge 1$, with equality only in the rank-one case.
Smaller values of $p$ place greater emphasis on the spread of small but non-negligible eigenvalues.
Accordingly, $\mathrm{EigenWD}(x)$ increases when the spectrum is less concentrated, indicating a more complex transform-cost structure across sampled responses.
Our implementation uses $p=0.1$.

\section{Robustness of AvgWD}
\label{sec:robustness}

We provide a robustness guarantee for AvgWD under token-level perturbations of the extracted hidden states. Intuitively, if the token embeddings of each sampled response are perturbed slightly in Frobenius norm, then the resulting pairwise Wasserstein distances change slightly, and therefore AvgWD changes slightly.

\paragraph{Setup.}
Fix a layer $\ell$ and suppress $(\ell)$ for simplicity. For each sampled response $i\in\{1,\dots,K\}$, collect the generated-token embeddings into a matrix
\begin{equation}
\mathbf{Z}_i \;=\; 
\begin{bmatrix}
\mathbf{z}_{i,1}^\top \\
\vdots \\
\mathbf{z}_{i,m_i}^\top
\end{bmatrix}
\in\mathbb{R}^{m_i\times d},
\end{equation}
and define the associated empirical measure
\begin{equation}
\mu_i=\frac{1}{m_i}\sum_{t=1}^{m_i}\delta_{\mathbf{z}_{i,t}}.
\end{equation}
Consider perturbed embeddings $\{\mathbf{Z}'_i\}_{i=1}^K$ such that, for each $i$, the perturbed sample retains the same support size $m_i$. Let $\mu_i'$ denote the corresponding perturbed empirical measure, and define
\begin{equation}
D_{ij}=W_2(\mu_i,\mu_j),\qquad D'_{ij}=W_2(\mu'_i,\mu'_j).
\end{equation}

\begin{lemma}[Two-sided Wasserstein stability]
\label{lem:two_sided_w2}
For any probability measures $\mu,\nu,\mu',\nu'$ on $\mathbb{R}^d$,
\begin{equation}
\bigl|W_2(\mu,\nu)-W_2(\mu',\nu')\bigr|
\;\le\;
W_2(\mu,\mu') + W_2(\nu,\nu').
\label{eq:two_sided_w2}
\end{equation}
\end{lemma}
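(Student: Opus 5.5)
The plan is to derive the inequality from the triangle inequality for $W_2$, applied twice. We assume all four measures have finite second moments, as in the definition of Eq.~\eqref{eq:w2_cont}; otherwise the quantities may be infinite and the statement is read with the usual convention. We therefore first record that $W_2$ is a metric on $\mathcal{P}_2(\mathbb{R}^d)$, in particular that it satisfies the triangle inequality. This is standard \citep{peyre2019computational}, and we would either cite it or sketch it via the gluing lemma.

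For the sketch, take optimal couplings $\gamma_1\in\Pi(\mu,\rho)$ and $\gamma_2\in\Pi(\rho,\nu)$. Glue them along their common marginal $\rho$ to obtain a measure $\pi$ on $(\mathbb{R}^d)^3$ whose $(1,2)$ and $(2,3)$ marginals are $\gamma_1$ and $\gamma_2$. The $(1,3)$ marginal of $\pi$ is then a coupling of $\mu$ and $\nu$. Minkowski's inequality in $L^2(\pi)$ applied to $\|u-w\|\le\|u-v\|+\|v-w\|$ gives
\[
W_2(\mu,\nu)\le W_2(\mu,\rho)+W_2(\rho,\nu).
\]
For the empirical measures used in this section, the gluing can be written explicitly as $\pi_{tsr}=P^{(1)}_{ts}P^{(2)}_{sr}/b_s$, so no disintegration machinery is needed. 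The main potential obstacle lies exactly here, namely justifying the gluing in general. We would therefore cite the metric property rather than reprove it.

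With the triangle inequality available, the main step is a chain of two applications:
\[
W_2(\mu,\nu)\le W_2(\mu,\mu')+W_2(\mu',\nu')+W_2(\nu',\nu).
\]
Using symmetry of $W_2$, this gives
\[
W_2(\mu,\nu)-W_2(\mu',\nu')\le W_2(\mu,\mu')+W_2(\nu,\nu').
\]
Exchanging the roles of the primed and unprimed measures gives the same bound for $W_2(\mu',\nu')-W_2(\mu,\nu)$. Combining the two one-sided bounds yields Eq.~\eqref{eq:two_sided_w2}.
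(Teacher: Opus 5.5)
Your proposal is correct and is essentially the paper's proof. The paper likewise writes the two triangle-inequality chains $W_2(\mu,\nu)\le W_2(\mu,\mu')+W_2(\mu',\nu')+W_2(\nu',\nu)$ and its primed/unprimed counterpart, then rearranges; your gluing sketch and finite-second-moment caveat are extra care the paper skips, since it simply takes the metric property of $W_2$ for granted.
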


\begin{lemma}[Token-level perturbation bound]
\label{lem:token_w2}
Let $\mathbf{Z},\mathbf{Z}'\in\mathbb{R}^{m\times d}$ and let
\begin{equation}
\mu(\mathbf{Z})=\frac{1}{m}\sum_{t=1}^m \delta_{\mathbf{z}_t},
\qquad
\mu(\mathbf{Z}')=\frac{1}{m}\sum_{t=1}^m \delta_{\mathbf{z}'_t}
\end{equation}
be the corresponding uniform empirical measures. Then
\begin{equation}
W_2\bigl(\mu(\mathbf{Z}),\mu(\mathbf{Z}')\bigr)
\;\le\;
\frac{\|\mathbf{Z}-\mathbf{Z}'\|_F}{\sqrt{m}}.
\label{eq:token_w2}
\end{equation}
\end{lemma}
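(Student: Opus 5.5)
The plan is to bound $W_2$ from above by exhibiting one explicit feasible coupling, namely the identity (diagonal) matching of tokens. Since $W_2^2$ is an infimum over all couplings in $\Pi(\mu(\mathbf{Z}),\mu(\mathbf{Z}'))$, any admissible transport plan gives an upper bound. So I would define
\begin{equation*}
\gamma \;=\; \frac{1}{m}\sum_{t=1}^m \delta_{(\mathbf{z}_t,\mathbf{z}'_t)},
\end{equation*}
which pairs the $t$-th token of $\mathbf{Z}$ with the $t$-th token of $\mathbf{Z}'$. In the discrete formulation of Eq.~\eqref{eq:emd2_disc}, this is the plan $\mathbf{P}=\frac{1}{m}\mathbf{I}_m$.

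First I would check that $\gamma$ has the right marginals. Its first marginal is $\frac1m\sum_t\delta_{\mathbf{z}_t}=\mu(\mathbf{Z})$, and its second is $\frac1m\sum_t\delta_{\mathbf{z}'_t}=\mu(\mathbf{Z}')$. Equivalently, $\mathbf{P}\mathbf{1}=\mathbf{P}^\top\mathbf{1}=\frac1m\mathbf{1}$, which matches the uniform weights. Repeated rows cause no difficulty: the empirical measure just places mass $k/m$ on a point that appears $k$ times, and the marginal identity still holds as a sum of Diracs.

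Next I would evaluate the transport cost of this plan:
\begin{equation*}
\int\|u-v\|_2^2\,d\gamma = \frac1m\sum_{t=1}^m\|\mathbf{z}_t-\mathbf{z}'_t\|_2^2 = \frac{\|\mathbf{Z}-\mathbf{Z}'\|_F^2}{m}.
\end{equation*}
The second equality holds because the squared Frobenius norm is the sum of the squared row norms. Taking the infimum in Eq.~\eqref{eq:w2_cont} gives $W_2^2\le\|\mathbf{Z}-\mathbf{Z}'\|_F^2/m$. Taking square roots then yields Eq.~\eqref{eq:token_w2}.

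I expect no real obstacle. The only points needing care are that the diagonal plan is feasible, which requires the same support size $m$ and uniform weights on both sides, and that both measures have finite second moments, which is automatic for finitely supported measures.
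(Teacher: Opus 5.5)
Your proposal is correct and matches the paper's proof: both use the diagonal coupling $\frac{1}{m}\sum_{t=1}^m \delta_{(\mathbf{z}_t,\mathbf{z}'_t)}$, compute its cost as $\|\mathbf{Z}-\mathbf{Z}'\|_F^2/m$, and take square roots. Your explicit marginal check, including the case of repeated rows, spells out a feasibility step that the paper only asserts.
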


The proofs of \Cref{lem:two_sided_w2,lem:token_w2} are deferred to Appendix~\ref{app:stability_proofs}.

\begin{theorem}[AvgWD is Lipschitz under token-level perturbations]
\label{thm:avgwd_lipschitz}
Define the per-sample perturbation magnitudes
\begin{equation}
\varepsilon_i \;=\; \frac{\|\mathbf{Z}_i - \mathbf{Z}'_i\|_F}{\sqrt{m_i}},
\qquad i=1,\dots,K.
\label{eq:eps_def}
\end{equation}
Then
\begin{equation}
\bigl|\mathrm{AvgWD}(\mathbf{Z}_{1:K})-\mathrm{AvgWD}(\mathbf{Z}'_{1:K})\bigr|
\;\le\;
\frac{2}{K}\sum_{i=1}^K \varepsilon_i.
\label{eq:avgwd_lip}
\end{equation}
\end{theorem}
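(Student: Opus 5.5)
The plan is to bound each entry of the distance matrix using the two lemmas, then average and count how many times each index appears. Throughout, $D_{ij}=W_2(\mu_i,\mu_j)$ and $D'_{ij}=W_2(\mu'_i,\mu'_j)$ as in the setup.

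\emph{Entrywise bound.} For each unordered pair $i<j$, I apply \Cref{lem:two_sided_w2} with $\mu=\mu_i$, $\nu=\mu_j$, $\mu'=\mu'_i$, $\nu'=\mu'_j$. This gives $|D_{ij}-D'_{ij}|\le W_2(\mu_i,\mu'_i)+W_2(\mu_j,\mu'_j)$. Since $\mathbf{Z}_i$ and $\mathbf{Z}'_i$ have the same number of rows $m_i$, \Cref{lem:token_w2} applies to each term. It yields $W_2(\mu_i,\mu'_i)\le \|\mathbf{Z}_i-\mathbf{Z}'_i\|_F/\sqrt{m_i}=\varepsilon_i$, and hence $|D_{ij}-D'_{ij}|\le \varepsilon_i+\varepsilon_j$.

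\emph{Averaging.} By the definition \eqref{eq:avgwd} and the triangle inequality,
\begin{equation*}
\bigl|\mathrm{AvgWD}(\mathbf{Z}_{1:K})-\mathrm{AvgWD}(\mathbf{Z}'_{1:K})\bigr|
\le \frac{2}{K(K-1)}\sum_{i<j}|D_{ij}-D'_{ij}|
\le \frac{2}{K(K-1)}\sum_{i<j}(\varepsilon_i+\varepsilon_j).
\end{equation*}

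\emph{Counting.} Each index $i$ belongs to exactly $K-1$ unordered pairs, so $\sum_{i<j}(\varepsilon_i+\varepsilon_j)=(K-1)\sum_{i=1}^K\varepsilon_i$. Substituting this gives the bound $\frac{2}{K}\sum_{i=1}^K\varepsilon_i$, which is \eqref{eq:avgwd_lip}. This assumes $K\ge 2$, which is needed for AvgWD to be defined.

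\emph{Main obstacle.} The theorem itself is bookkeeping once the lemmas are granted, so the substantive work lies in the lemmas. \Cref{lem:two_sided_w2} follows from the metric triangle inequality for $W_2$ applied twice: $W_2(\mu,\nu)\le W_2(\mu,\mu')+W_2(\mu',\nu')+W_2(\nu',\nu)$, together with the symmetric inequality. \Cref{lem:token_w2} follows by taking the identity coupling $\frac1m\sum_t\delta_{(\mathbf{z}_t,\mathbf{z}'_t)}$, whose cost is $\frac1m\|\mathbf{Z}-\mathbf{Z}'\|_F^2$. The one point requiring care is that the perturbed measure keeps the same support size $m_i$, so that this coupling is admissible; the setup explicitly assumes this.
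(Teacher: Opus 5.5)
Your proposal is correct and follows the paper's own proof essentially step for step. You bound each entry by $|D_{ij}-D'_{ij}|\le\varepsilon_i+\varepsilon_j$ using the two-sided stability lemma and the token-level lemma, then average over unordered pairs with the counting identity $\sum_{i<j}(\varepsilon_i+\varepsilon_j)=(K-1)\sum_{i=1}^K\varepsilon_i$ to get $\frac{2}{K}\sum_{i=1}^K\varepsilon_i$.
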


\noindent\textbf{Proof.} Deferred to Appendix~\ref{app:stability_proofs}.

\begin{corollary}
\label{cor:avgwd_uniform_eps}
If $\varepsilon_i\le \varepsilon$ for all $i$, then
\begin{equation}
\bigl|\mathrm{AvgWD}(\mathbf{Z}_{1:K})-\mathrm{AvgWD}(\mathbf{Z}'_{1:K})\bigr|
\le 2\varepsilon.
\end{equation}
\end{corollary}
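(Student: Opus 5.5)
The corollary follows by substituting the uniform bound into \Cref{thm:avgwd_lipschitz}, which we take as given. That theorem already controls the difference of the two AvgWD values by the average per-sample perturbation magnitude, so all that remains is to bound that average.

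Assume $\varepsilon_i\le\varepsilon$ for every $i=1,\dots,K$. Each $\varepsilon_i$ is nonnegative, being a Frobenius norm divided by $\sqrt{m_i}$. Summing the hypothesis over $i$ gives
\begin{equation}
\sum_{i=1}^K \varepsilon_i \le K\varepsilon .
\end{equation}
Inserting this into the right-hand side of \eqref{eq:avgwd_lip}, the factor $\tfrac{2}{K}$ cancels the $K$ and leaves
\begin{equation}
\bigl|\mathrm{AvgWD}(\mathbf{Z}_{1:K})-\mathrm{AvgWD}(\mathbf{Z}'_{1:K})\bigr| \le \frac{2}{K}\cdot K\varepsilon = 2\varepsilon .
\end{equation}

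There is no real obstacle; the only implicit requirement is $K\ge 2$, so that AvgWD is defined at all. For completeness, the theorem itself follows in three steps. First, \Cref{lem:two_sided_w2} gives $|D_{ij}-D'_{ij}|\le W_2(\mu_i,\mu_i')+W_2(\mu_j,\mu_j')$. Second, \Cref{lem:token_w2} bounds each of these terms, so $|D_{ij}-D'_{ij}|\le\varepsilon_i+\varepsilon_j$. Third, averaging over the $K(K-1)/2$ unordered pairs, each index $i$ appears in exactly $K-1$ pairs, so the total is at most $\tfrac{2}{K(K-1)}(K-1)\sum_i\varepsilon_i$. This equals $\tfrac{2}{K}\sum_i\varepsilon_i$, as claimed.
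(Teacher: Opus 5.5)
Your proposal is correct and takes the same route as the paper. The corollary follows at once from \Cref{thm:avgwd_lipschitz} via $\sum_i\varepsilon_i\le K\varepsilon$, and your sketch of the theorem (two-sided $W_2$ stability, the identity-coupling token bound, and the count of $K-1$ pairs per index) is exactly the appendix argument.
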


\paragraph{Remark on EigenWD stability.}
EigenWD is obtained by composing kernelization of $\mathbf{D}$, spectral decomposition of the resulting kernel matrix, and a ratio of spectral quasi-norms. With the diagonal shift $\mathbf{K}\leftarrow \mathbf{K}+\alpha\mathbf{I}$ used in our implementation, the kernel matrix is better conditioned, and EigenWD is locally Lipschitz with respect to $\|\mathbf{D}-\mathbf{D}'\|_F$ under mild boundedness assumptions. A concrete statement and proof are given in Appendix~\ref{app:stability_proofs}.

\section{Experiments}
\label{sec:experiments}
\subsection{Experimental Settings}\label{sec:setup}

\paragraph{Datasets.}
We evaluate in the white-box setting on two extractive QA benchmarks, \textbf{CoQA} \citep{reddy2019coqa}
and \textbf{SQuAD} \citep{rajpurkar2016squad}, and further consider two additional domains:
mathematical reasoning on \textbf{MATH-500} (a subset of \textbf{MATH}) \citep{hendrycks2021measuring}
and abstractive summarization on \textbf{CNN/DailyMail} \citep{hermann2015teaching,nallapati2016abstractive}.
Each example consists of a prompt (question/problem/document) and a corresponding reference (answer/summary).
For each prompt, we sample $K$ stochastic generations from the target model and perform hallucination detection at the \emph{prompt level}
(i.e., whether the model's response is correct/grounded w.r.t.\ the reference answer or reference summary).

\paragraph{Label construction.}
We derive binary correctness labels automatically.
For extractive QA benchmarks \textbf{CoQA} and \textbf{SQuAD}, we compute ROUGE-L \citep{lin2004rouge}
between each generated answer and the reference answer,
and label the response as \emph{correct} if $\text{ROUGE-L} \ge 0.5$, and \emph{hallucinated} otherwise.
For \textbf{Math-500} and \textbf{CNN/DailyMail}, where exact-match style string comparison is not reliable (free-form numeric answers and open-ended summaries),
we use an LLM-based judge: \textbf{GPT-4o} is prompted with the input and the model response (and reference, when available) to output a binary correctness decision.
This follows the common ``LLM-as-a-judge'' evaluation paradigm \citep{liu2023g,min2023factscore}, using a GPT-4-class model \citep{achiam2023gpt}.
We then use these labels for prompt-level hallucination detection evaluation.

\paragraph{Models.}
We run experiments on open-source LLMs under the white-box access regime. Concretely, we report results for
\textbf{Llama-3.2-3B}, \textbf{Llama-2-7B} \citep{touvron2023llama}, \textbf{Llama-3.1-8B}, \textbf{Qwen3-8B}, and \textbf{Qwen3-32B}.
All models are evaluated with identical decoding and scoring pipelines (same temperature and truncation settings, the same number of sampled responses $K$, and the same embedding layer)
to ensure fair comparison across methods.

\paragraph{Evaluation metric.}
We treat each detector as producing a real-valued hallucination score per prompt, and report
\textbf{AUROC} (area under the ROC curve; higher is better), which is threshold-free and standard for binary detection.

\paragraph{Baselines.}
We compare AvgWD/EigenWD against strong training-free uncertainty baselines:
(i) \textbf{Discrete Semantic Entropy (DSE)} \citep{farquhar2024detecting,kuhn2023semantic};
(ii) \textbf{Eigenscore (ES)} \citep{chen2024inside};
(iii) \textbf{Length-Normalized Entropy (LNE)} \citep{malinin2020uncertainty,kadavath2022language};
(iv) \textbf{Lexical Similarity (LS)} \citep{manakul2023selfcheckgpt};
and (v) \textbf{Effective Rank (ER)} \citep{roy2007effective,wang2025revisiting}.
All baseline methods use the same set of sampled generations per prompt.

\paragraph{Implementation details.}
We sample $K=10$ responses per prompt with temperature $\tau=0.5$.
We extract hidden-state embeddings from the middle layer $\ell=\lfloor L/2\rfloor$
(where $L$ is the number of transformer layers), and form token-level measures using
generation-continuation tokens only (excluding the prompt segment and EOS), with uniform token weights.
We compute pairwise OT discrepancies following standard optimal transport formulations \citep{peyre2019computational}
via exact EMD with squared $\ell_2$ ground cost and take the square root
to obtain $W_2$ distances. All experiments are run on an NVIDIA RTX PRO 6000 GPU with 96GB memory.

% -------------------------
% Fig. 3: CoQA heatmaps
% -------------------------
\begin{figure}[t]
  \centering
  \begin{subfigure}[t]{0.48\linewidth}
    \centering
    \includegraphics[width=\linewidth]{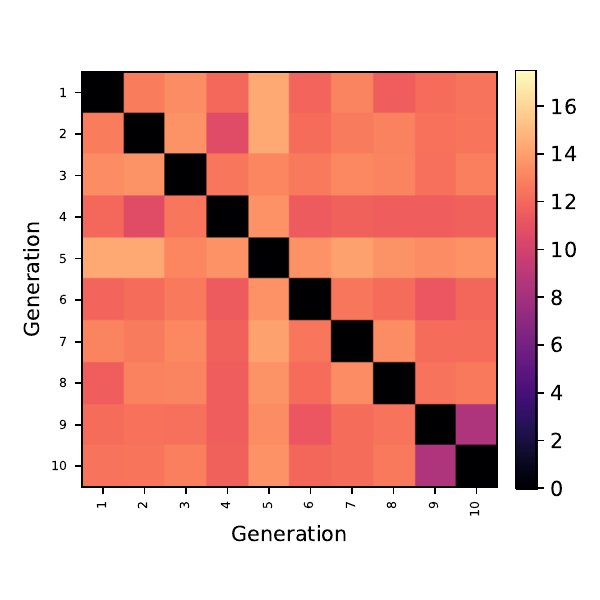}
    \caption{CoQA (hallucination)}
  \end{subfigure}\hfill
  \begin{subfigure}[t]{0.48\linewidth}
    \centering
    \includegraphics[width=\linewidth]{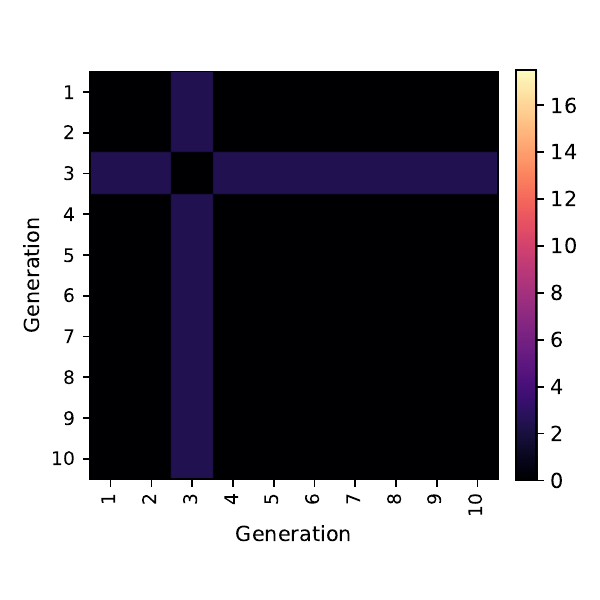}
    \caption{CoQA (non-hallucination)}
  \end{subfigure}
  \caption{\textbf{Heatmaps of sample-to-sample OT costs on CoQA.}
  For a prompt, $D_{ij}=W_2(\mu_i,\mu_j)$ is the Wasserstein distance between the $i$-th and $j$-th sampled responses (diagonal is zero);
  darker cells indicate larger transform costs. Hallucinated cases often exhibit larger average costs and/or more fragmented block structure,
  motivating AvgWD (magnitude) and EigenWD (structure).}
  \label{fig:ot_coqa_singlecol}
  \vspace{-10pt}
\end{figure}

% -------------------------
% Fig. 3: SQuAD heatmaps
% -------------------------
\begin{figure}[t]
  \centering
  \begin{subfigure}[t]{0.48\linewidth}
    \centering
    \includegraphics[width=\linewidth]{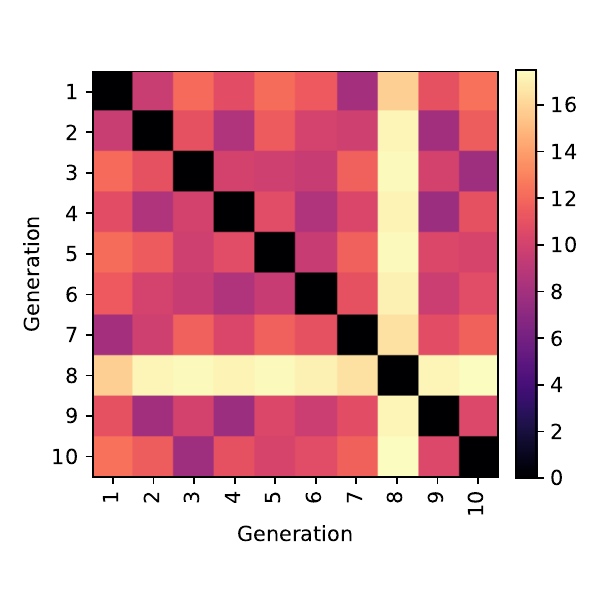}
    \caption{SQuAD (hallucination)}
  \end{subfigure}\hfill
  \begin{subfigure}[t]{0.48\linewidth}
    \centering
    \includegraphics[width=\linewidth]{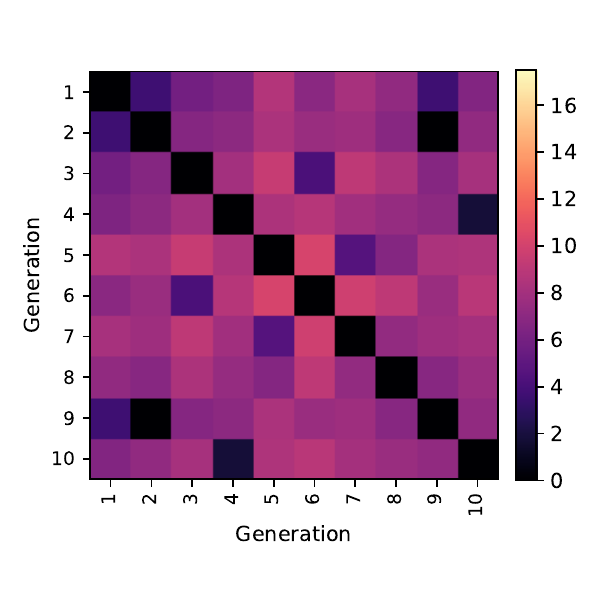}
    \caption{SQuAD (non-hallucination)}
  \end{subfigure}
  \caption{\textbf{Heatmaps of sample-to-sample OT costs on SQuAD.}
  For a prompt, $D_{ij}=W_2(\mu_i,\mu_j)$ is the Wasserstein distance between the $i$-th and $j$-th sampled responses (diagonal is zero);
  darker cells indicate larger transform costs. Hallucinated cases often exhibit larger average costs and/or more fragmented block structure,
  motivating AvgWD (magnitude) and EigenWD (structure).}
  \label{fig:ot_squad_singlecol}
  \vspace{-10pt}
\end{figure}

% -------------------------
% Visualization paragraph (extended to define Fig.4)
% -------------------------
\paragraph{Visualization.}
We visualize (i) the pairwise Wasserstein distance matrix $D\in\mathbb{R}^{K\times K}$ as a heatmap
(where $K$ is the number of sampled responses per prompt and $D_{ij}$ is the cost between the $i$-th and $j$-th responses),
and (ii) the token-level optimal transport plan $P\in\mathbb{R}_+^{n_i\times n_j}$ for a representative response pair,
where $P_{ts}$ denotes transported mass from token $t$ in response $i$ to token $s$ in response $j$
under the squared $\ell_2$ ground cost.
\textbf{Reading the heatmaps(Figs.~\ref{fig:ot_coqa_singlecol} and~\ref{fig:ot_squad_singlecol}).} Each axis indexes sampled responses for the same prompt.
Non-hallucinated cases tend to yield uniformly low costs and a single coherent neighborhood,
whereas hallucinated cases often show globally larger costs or multiple separated low-cost groups (block structure),
indicating diverse and inconsistent generations.
\textbf{Cost-graph view (Fig.~\ref{fig:ot_plan_vis}).}
To further expose the geometry induced by the sample-to-sample costs, we visualize $D$ as a weighted neighbor graph:
each node is a sampled response for a prompt, node positions are obtained from a 2D embedding computed from the precomputed distance matrix $D$
(e.g., t-SNE on distances), and edges connect nearest-neighbor pairs under $D$ (symmetrized to form an undirected graph).
Colors indicate hallucinated vs.\ non-hallucinated cases, while marker shapes (A1--A3) denote different representative prompts within each group.

% -------------------------
% Fig. 4: cost-graph visualization
% -------------------------
\begin{figure}[t]
  \centering
  \includegraphics[width=\linewidth]{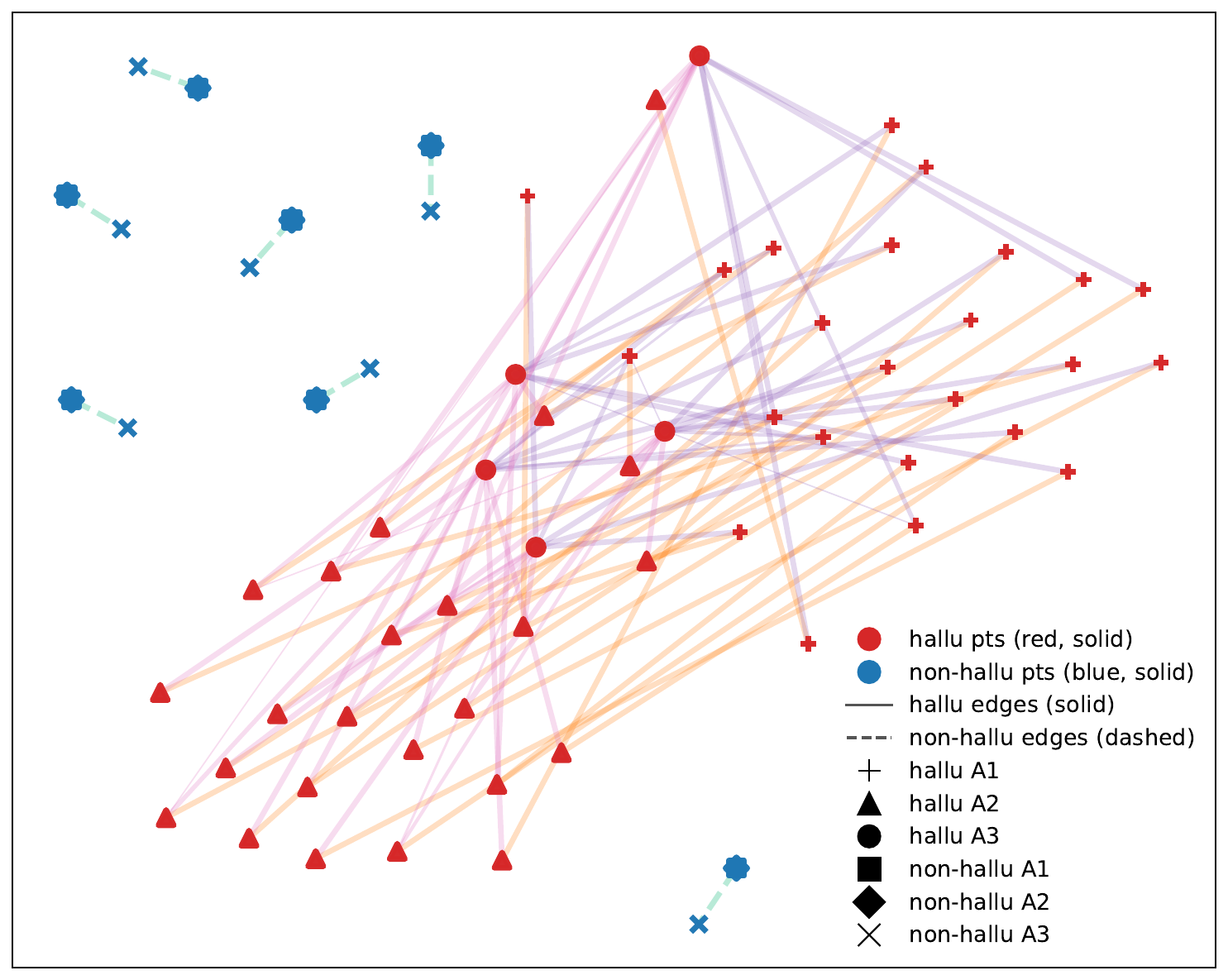}
  \caption{\textbf{Cost-graph visualization from sample transform costs.}
  Each node is a sampled response for a prompt; node positions come from a 2D embedding of the pairwise Wasserstein matrix $D$ (precomputed distances),
  and edges connect nearest-neighbor pairs under $D$ (solid: hallucinated cases; dashed: non-hallucinated).
  Marker shapes (A1--A3) denote different representative prompts within each group.}
  \label{fig:ot_plan_vis}
  \vspace{-10pt}
\end{figure}

\begin{table}[t]
\centering
\caption{White-box hallucination detection (AUROC; higher is better. Best and second-best per row are marked by \textbf{bold} and \underline{underline}.). We report both AvgWD (cost magnitude) and EigenWD (cost-structure complexity).}
\label{tab:whitebox_results}
\scriptsize
\setlength{\tabcolsep}{3pt}
\renewcommand{\arraystretch}{0.92}
\resizebox{\columnwidth}{!}{%
\begin{tabular}{llccccccc}
\toprule
Model & Dataset & AvgWD & EigenWD & ER & ES & DSE & LNE & LS \\
\midrule
\multirow{4}{*}{Llama-3.2-3b}
& CoQA & \underline{0.811} & \textbf{0.816} & 0.755 & 0.749 & 0.781 & 0.602 & 0.773 \\
& SQuAD & \underline{0.742} & \textbf{0.753} & \underline{0.742} & 0.692 & 0.713 & 0.615 & 0.741 \\
& MATH-500 & \underline{0.715} & \textbf{0.728} & 0.683 & 0.702 & 0.693 & 0.606 & 0.688 \\
& CNN/DailyMail & \underline{0.658} & \textbf{0.674} & 0.652 & 0.633 & 0.628 & 0.629 & 0.656 \\
\midrule
& Average & \underline{0.732} & \textbf{0.743} & 0.708 & 0.694 & 0.704 & 0.613 & 0.715 \\
\midrule
\multirow{4}{*}{Llama-2-7b}
& CoQA & \textbf{0.815} & \underline{0.812} & 0.794 & 0.799 & 0.805 & 0.760 & 0.802 \\
& SQuAD & \textbf{0.788} & \textbf{0.788} & 0.755 & 0.755 & 0.742 & 0.693 & \underline{0.784} \\
& MATH-500 & \underline{0.702} & \textbf{0.708} & 0.658 & 0.654 & 0.689 & 0.622 & 0.671 \\
& CNN/DailyMail & \underline{0.666} & \textbf{0.668} & 0.658 & 0.646 & 0.650 & 0.637 & 0.634 \\
\midrule
& Average & \underline{0.743} & \textbf{0.744} & 0.716 & 0.714 & 0.722 & 0.678 & 0.723 \\
\midrule
\multirow{4}{*}{Llama-3.1-8b}
& CoQA & \underline{0.771} & \textbf{0.789} & 0.729 & 0.712 & 0.761 & 0.618 & 0.769 \\
& SQuAD & \textbf{0.817} & \underline{0.799} & 0.767 & 0.785 & 0.769 & 0.613 & 0.734 \\
& MATH-500 & 0.713 & \underline{0.722} & 0.705 & 0.709 & \textbf{0.726} & 0.603 & 0.711 \\
& CNN/DailyMail & \underline{0.625} & \textbf{0.629} & 0.624 & 0.620 & 0.615 & 0.548 & 0.605 \\
\midrule
& Average & \underline{0.732} & \textbf{0.735} & 0.706 & 0.707 & 0.718 & 0.596 & 0.705 \\
\midrule
\multirow{4}{*}{Qwen3-8B}
& CoQA & \textbf{0.742} & 0.734 & 0.735 & 0.733 & 0.733 & 0.557 & \underline{0.738} \\
& SQuAD & \textbf{0.656} & 0.645 & 0.626 & 0.627 & \underline{0.647} & 0.617 & \underline{0.647} \\
& MATH-500 & \underline{0.726} & 0.712 & 0.715 & 0.718 & \textbf{0.729} & 0.656 & 0.689 \\
& CNN/DailyMail & \textbf{0.679} & 0.668 & \underline{0.675} & 0.668 & 0.654 & 0.615 & 0.645 \\
\midrule
& Average & \textbf{0.701} & 0.690 & 0.688 & 0.687 & \underline{0.691} & 0.611 & 0.680 \\
\midrule
\multirow{4}{*}{Qwen3-32B}
& CoQA & \underline{0.746} & 0.727 & 0.722 & 0.725 & 0.715 & 0.616 & \textbf{0.758} \\
& SQuAD & \textbf{0.718} & \underline{0.711} & 0.700 & 0.701 & 0.696 & 0.593 & 0.668 \\
& MATH-500 & \textbf{0.731} & \underline{0.722} & 0.704 & 0.690 & 0.672 & 0.613 & 0.659 \\
& CNN/DailyMail & \textbf{0.694} & 0.681 & 0.668 & \underline{0.685} & 0.658 & 0.606 & 0.672 \\
\midrule
& Average & \textbf{0.722} & \underline{0.710} & 0.699 & 0.700 & 0.685 & 0.607 & 0.689 \\
\bottomrule
\end{tabular}}
\end{table}

\subsection{White-box Detection Results}
\label{subsec:whitebox_results}

Table~\ref{tab:whitebox_results} reports AUROC for white-box hallucination detection across five open-source LLMs and four datasets. Overall, our \textbf{sample transform cost} signals are consistently competitive with strong training-free baselines, and often achieve the best performance.

\paragraph{Overall effectiveness.}
Across the three Llama-family models, our method yields clear gains on average. In particular, \textbf{EigenWD} achieves the highest average AUROC for Llama-3.2-3B (0.743 vs.\ 0.715 best baseline), Llama-2-7B (0.744 vs.\ 0.723), and Llama-3.1-8B (0.735 vs.\ 0.718), indicating that \emph{cost-structure complexity} is a reliable indicator of hallucination under white-box access. On Qwen models, \textbf{AvgWD} is stronger: it attains the best average AUROC for Qwen3-8B (0.701) and Qwen3-32B (0.722), outperforming the best baseline averages (0.691 and 0.700 respectively). 

\paragraph{Complementary behavior of AvgWD and EigenWD.}
AvgWD and EigenWD capture different aspects of the same Wasserstein distance matrix. AvgWD summarizes the \emph{magnitude} of sample-to-sample transform costs, while EigenWD captures the \emph{complexity} of how these costs are organized across samples. Empirically, we observe a consistent complementarity: EigenWD tends to be more advantageous on Llama models and on settings where multi-modal inconsistency arises (e.g., CoQA for Llama-3.2-3B), whereas AvgWD can be more effective when the dominant signal is the overall cost scale (e.g., Qwen3-32B on multiple datasets). This suggests that hallucinations can manifest either as uniformly larger transform costs or as a more intricate, fragmented cost structure, depending on the model family and task.

\paragraph{Across-task robustness.}
The improvements persist across heterogeneous domains, including extractive QA (CoQA, SQuAD), mathematical reasoning (MATH-500), and long-form summarization (CNN/DailyMail). Since all methods are evaluated on the same sampled generations and under identical decoding settings, the gains are attributable to the proposed representation-space transform-cost characterization rather than sampling artifacts. In summary, the main results demonstrate that \textbf{distribution complexity measured through sample transform costs} provides a strong, training-free signal for hallucination detection.

\subsection{Ablation Studies}
\label{subsec:ablations}

We analyze how decoding hyperparameters affect training-free hallucination detection, focusing on the number of sampled responses $K$ and the sampling temperature $\tau$. Since our approach characterizes the conditional distribution $p_\theta(\cdot\mid x)$ through \emph{sample-to-sample transform costs}, these hyperparameters directly change the empirical distributional complexity we observe. We report detailed results for Llama-3.1-8B in Appendix~C.

\begin{figure}[t]
  \centering
  \begin{subfigure}[t]{0.49\columnwidth}
    \centering
    \includegraphics[width=\linewidth]{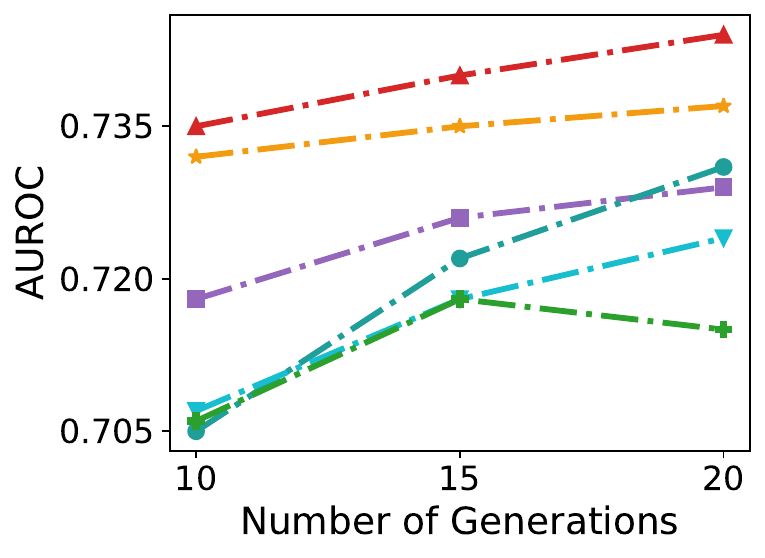}
    \caption{Varying $K$ (number of generations).}
    \label{fig:ablation_llama31_k}
  \end{subfigure}\hfill
  \begin{subfigure}[t]{0.49\columnwidth}
    \centering
    \includegraphics[width=\linewidth]{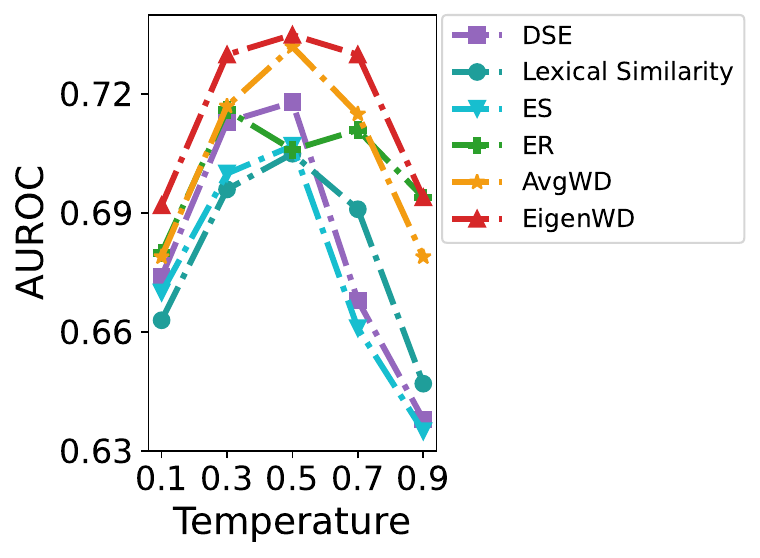}
    \caption{Varying temperature $\tau$.}
    \label{fig:ablation_llama31_tau}
  \end{subfigure}

  \vspace{-1mm}
  \caption{Ablation results on Llama-3.1-8B. Left: AUROC versus the number of sampled responses $K$. Right: AUROC versus temperature $\tau$. Unless otherwise stated, we keep decoding settings (including top-$k$ and top-$\rho$) fixed across runs.}
  \label{fig:ablation_llama31}
  \vspace{-10pt}
\end{figure}

\paragraph{Effect of the number of generations.}
We observe a consistent improvement for our OT-based signals as $K$ increases: both AvgWD (average transform cost) and EigenWD (cost-structure complexity) monotonically increase in AUROC from $K{=}10$ to $K{=}20$. This behavior is expected because larger $K$ provides a more faithful empirical characterization of the conditional distribution, yielding a more stable estimate of both the mean cost and the spectral complexity of the Wasserstein distance matrix. Notably, EigenWD benefits slightly more than AvgWD, suggesting that capturing the \emph{structure} of transform costs becomes increasingly reliable as more samples reveal multi-modal inconsistencies.

\paragraph{Effect of temperature.}
Performance is non-monotonic: moderate temperatures (around $\tau\in[0.3,0.7]$) provide the strongest detection, while very low temperature reduces diversity and very high temperature introduces excessive randomness. From the perspective of distribution complexity, $\tau$ controls a bias--variance trade-off: if $\tau$ is too small, samples collapse to near-deterministic continuations and the transform-cost matrix becomes less informative; if $\tau$ is too large, samples deviate in uncontrolled ways and the induced costs mix signal with noise. Across the full range, EigenWD remains the best-performing curve, indicating that spectral cost complexity is a robust signal even when sampling conditions change.

\subsection{Black-box Detection Case Study}
Full details and results of the black-box experiments are deferred to Appendix~\ref{app:blackbox_full} for space considerations, as the methodology and evaluation protocol remain consistent across both settings. Notably, our method also significantly outperforms all baselines in the black-box regime. For illustration, we include a representative example in Table~\ref{tab:blackbox_case} to provide a concrete case study of this behavior.

\begin{table}[t]
\centering
\caption{Black-box case study: DeepSeek-Chat (target model) with Llama-2-7B (teacher model). AUROC for hallucination detection (higher is better). Best and second-best per row are marked by \textbf{bold} and \underline{underline}.}
\label{tab:blackbox_case}
\scriptsize
\setlength{\tabcolsep}{3pt}
\renewcommand{\arraystretch}{1.0}
\resizebox{\columnwidth}{!}{%
\begin{tabular}{lcccccc}
\toprule
Dataset & \textbf{EigenWD M1} & ER & ES & LNE & LS & DSE \\
\midrule
SciQ & \textbf{0.6480} & \underline{0.6398} & 0.5784 & 0.5916 & 0.4921 & 0.5978 \\
NQ-open & \textbf{0.7861} & 0.6797 & 0.6088 & 0.6585 & 0.6804 & \underline{0.7114} \\
Math500 & \textbf{0.8889} & \underline{0.8760} & 0.5587 & 0.5344 & \underline{0.8335} & 0.5997 \\
\midrule
Average & \textbf{0.7743} & \underline{0.7318} & 0.5820 & 0.5948 & 0.6687 & 0.6363 \\
\bottomrule
\end{tabular}
}
\end{table}

\section{Conclusion}
This work revisited hallucination detection from an optimal-transport perspective and proposes a lightweight, training-free uncertainty signal derived from the geometry of hidden representations across multiple generations. Specifically, we introduce two complementary Wasserstein-based
detectors: \emph{AvgWD} and \emph{EigenWD}. Together,
they disentangle \emph{how far} responses drift in representation space from \emph{how complex} the drift pattern is, providing an interpretable view of hallucination-related uncertainty.

Extensive experiments across multiple datasets and model families demonstrate that our measures
achieve strong and consistent detection performance under common decoding settings, while remaining
efficient: they rely only on model-internal states and require no additional supervision, retrieval, or
auxiliary verification modules.

% \clearpage

\section*{Limitations}
The time complexity of our method is higher, but remains within an acceptable range.

\section*{Impact Statement}
This paper presents work whose goal is to advance the field of Machine Learning. There are many potential societal consequences of our work, none which we feel must be specifically highlighted here.

\bibliography{references}

@article{brown2020language,
  title={Language models are few-shot learners},
  author={Brown, Tom and Mann, Benjamin and Ryder, Nick and Subbiah, Melanie and Kaplan, Jared D and Dhariwal, Prafulla and Neelakantan, Arvind and Shyam, Pranav and Sastry, Girish and Askell, Amanda and others},
  journal={Advances in neural information processing systems},
  volume={33},
  pages={1877--1901},
  year={2020}
}

@article{achiam2023gpt,
  title={Gpt-4 technical report},
  author={Achiam, Josh and Adler, Steven and Agarwal, Sandhini and Ahmad, Lama and Akkaya, Ilge and Aleman, Florencia Leoni and Almeida, Diogo and Altenschmidt, Janko and Altman, Sam and Anadkat, Shyamal and others},
  journal={arXiv preprint arXiv:2303.08774},
  year={2023}
}

@article{touvron2023llama,
  title={Llama 2: Open foundation and fine-tuned chat models},
  author={Touvron, Hugo and Martin, Louis and Stone, Kevin and Albert, Peter and Almahairi, Amjad and Babaei, Yasmine and Bashlykov, Nikolay and Batra, Soumya and Bhargava, Prajjwal and Bhosale, Shruti and others},
  journal={arXiv preprint arXiv:2307.09288},
  year={2023}
}

@inproceedings{maynez2020faithfulness,
  title        = {On Faithfulness and Factuality in Abstractive Summarization},
  author       = {Maynez, Joshua and Narayan, Shashi and Bohnet, Bernd and McDonald, Ryan},
  booktitle    = {Proceedings of the 58th Annual Meeting of the Association for Computational Linguistics (ACL)},
  pages        = {1906--1919},
  year         = {2020}
}

@article{ji2023survey,
  title     = {Survey of Hallucination in Natural Language Generation},
  author    = {Ji, Ziwei and Lee, Nayeon and Frieske, Rita and Yu, Tiezheng and Su, Dan and Xu, Yan and Ishii, Etsuko and Bang, Yejin and Chen, Delong and Dai, Wenliang and Chan, Ho Shu and Madotto, Andrea and Fung, Pascale},
  journal   = {ACM Computing Surveys},
  volume    = {55},
  number    = {12},
  pages     = {1--38},
  year      = {2023},
  publisher = {ACM}
}

@article{huang2025survey,
  title={A survey on hallucination in large language models: Principles, taxonomy, challenges, and open questions},
  author={Huang, Lei and Yu, Weijiang and Ma, Weitao and Zhong, Weihong and Feng, Zhangyin and Wang, Haotian and Chen, Qianglong and Peng, Weihua and Feng, Xiaocheng and Qin, Bing and others},
  journal={ACM Transactions on Information Systems},
  volume={43},
  number={2},
  pages={1--55},
  year={2025},
  publisher={ACM New York, NY}
}

@inproceedings{lin2022truthfulqa,
  title={Truthfulqa: Measuring how models mimic human falsehoods},
  author={Lin, Stephanie and Hilton, Jacob and Evans, Owain},
  booktitle={Proceedings of the 60th annual meeting of the association for computational linguistics (volume 1: long papers)},
  pages={3214--3252},
  year={2022}
}

@article{welleck2019neural,
  title={Neural text generation with unlikelihood training},
  author={Welleck, Sean and Kulikov, Ilia and Roller, Stephen and Dinan, Emily and Cho, Kyunghyun and Weston, Jason},
  journal={arXiv preprint arXiv:1908.04319},
  year={2019}
}

@article{kendall2017uncertainties,
  title={What uncertainties do we need in bayesian deep learning for computer vision?},
  author={Kendall, Alex and Gal, Yarin},
  journal={Advances in neural information processing systems},
  volume={30},
  year={2017}
}

@inproceedings{gal2016dropout,
  title={Dropout as a bayesian approximation: Representing model uncertainty in deep learning},
  author={Gal, Yarin and Ghahramani, Zoubin},
  booktitle={international conference on machine learning},
  pages={1050--1059},
  year={2016},
  organization={PMLR}
}

@article{lakshminarayanan2017simple,
  title={Simple and scalable predictive uncertainty estimation using deep ensembles},
  author={Lakshminarayanan, Balaji and Pritzel, Alexander and Blundell, Charles},
  journal={Advances in neural information processing systems},
  volume={30},
  year={2017}
}

@article{farquhar2024detecting,
  title={Detecting hallucinations in large language models using semantic entropy},
  author={Farquhar, Sebastian and Kossen, Jannik and Kuhn, Lorenz and Gal, Yarin},
  journal={Nature},
  volume={630},
  number={8017},
  pages={625--630},
  year={2024},
  publisher={Nature Publishing Group UK London}
}

@article{vaswani2017attention,
  title={Attention is all you need},
  author={Vaswani, Ashish and Shazeer, Noam and Parmar, Niki and Uszkoreit, Jakob and Jones, Llion and Gomez, Aidan N and Kaiser, {\L}ukasz and Polosukhin, Illia},
  journal={Advances in neural information processing systems},
  volume={30},
  year={2017}
}

@article{cuturi2013sinkhorn,
  title={Sinkhorn distances: Lightspeed computation of optimal transport},
  author={Cuturi, Marco},
  journal={Advances in neural information processing systems},
  volume={26},
  year={2013}
}

@inproceedings{roy2007effective,
  title={The effective rank: A measure of effective dimensionality},
  author={Roy, Olivier and Vetterli, Martin},
  booktitle={2007 15th European signal processing conference},
  pages={606--610},
  year={2007},
  organization={IEEE}
}

@inproceedings{manakul2023selfcheckgpt,
  title={Selfcheckgpt: Zero-resource black-box hallucination detection for generative large language models},
  author={Manakul, Potsawee and Liusie, Adian and Gales, Mark},
  booktitle={Proceedings of the 2023 conference on empirical methods in natural language processing},
  pages={9004--9017},
  year={2023}
}

@article{lewis2020retrieval,
  title={Retrieval-augmented generation for knowledge-intensive nlp tasks},
  author={Lewis, Patrick and Perez, Ethan and Piktus, Aleksandra and Petroni, Fabio and Karpukhin, Vladimir and Goyal, Naman and K{\"u}ttler, Heinrich and Lewis, Mike and Yih, Wen-tau and Rockt{\"a}schel, Tim and others},
  journal={Advances in neural information processing systems},
  volume={33},
  pages={9459--9474},
  year={2020}
}

@article{kuhn2023semantic,
  title={Semantic uncertainty: Linguistic invariances for uncertainty estimation in natural language generation},
  author={Kuhn, Lorenz and Gal, Yarin and Farquhar, Sebastian},
  journal={arXiv preprint arXiv:2302.09664},
  year={2023}
}

@article{kadavath2022language,
  title={Language models (mostly) know what they know},
  author={Kadavath, Saurav and Conerly, Tom and Askell, Amanda and Henighan, Tom and Drain, Dawn and Perez, Ethan and Schiefer, Nicholas and Hatfield-Dodds, Zac and DasSarma, Nova and Tran-Johnson, Eli and others},
  journal={arXiv preprint arXiv:2207.05221},
  year={2022}
}

@article{chen2024inside,
  title={INSIDE: LLMs' internal states retain the power of hallucination detection},
  author={Chen, Chao and Liu, Kai and Chen, Ze and Gu, Yi and Wu, Yue and Tao, Mingyuan and Fu, Zhihang and Ye, Jieping},
  journal={arXiv preprint arXiv:2402.03744},
  year={2024}
}

@article{malinin2020uncertainty,
  title={Uncertainty estimation in autoregressive structured prediction},
  author={Malinin, Andrey and Gales, Mark},
  journal={arXiv preprint arXiv:2002.07650},
  year={2020}
}

@article{sriramanan2024llm,
  title={Llm-check: Investigating detection of hallucinations in large language models},
  author={Sriramanan, Gaurang and Bharti, Siddhant and Sadasivan, Vinu Sankar and Saha, Shoumik and Kattakinda, Priyatham and Feizi, Soheil},
  journal={Advances in Neural Information Processing Systems},
  volume={37},
  pages={34188--34216},
  year={2024}
}

@book{peyre2019computational,
  title={Computational optimal transport: With applications to data science},
  author={Peyr{\'e}, Gabriel and Cuturi, Marco},
  year={2019},
  publisher={Now Foundations and Trends}
}

@article{wang2025revisiting,
  title={Revisiting Hallucination Detection with Effective Rank-based Uncertainty},
  author={Wang, Rui and Wei, Zeming and Yue, Guanzhang and Sun, Meng},
  journal={arXiv preprint arXiv:2510.08389},
  year={2025}
}

@inproceedings{li2023halueval,
  title={Halueval: A large-scale hallucination evaluation benchmark for large language models},
  author={Li, Junyi and Cheng, Xiaoxue and Zhao, Wayne Xin and Nie, Jian-Yun and Wen, Ji-Rong},
  booktitle={Proceedings of the 2023 conference on empirical methods in natural language processing},
  pages={6449--6464},
  year={2023}
}

@inproceedings{gao2023rarr,
  title={Rarr: Researching and revising what language models say, using language models},
  author={Gao, Luyu and Dai, Zhuyun and Pasupat, Panupong and Chen, Anthony and Chaganty, Arun Tejasvi and Fan, Yicheng and Zhao, Vincent and Lao, Ni and Lee, Hongrae and Juan, Da-Cheng and others},
  booktitle={Proceedings of the 61st Annual Meeting of the Association for Computational Linguistics (Volume 1: Long Papers)},
  pages={16477--16508},
  year={2023}
}

@inproceedings{liu2023g,
  title={G-eval: NLG evaluation using gpt-4 with better human alignment},
  author={Liu, Yang and Iter, Dan and Xu, Yichong and Wang, Shuohang and Xu, Ruochen and Zhu, Chenguang},
  booktitle={Proceedings of the 2023 conference on empirical methods in natural language processing},
  pages={2511--2522},
  year={2023}
}

@inproceedings{min2023factscore,
  title={Factscore: Fine-grained atomic evaluation of factual precision in long form text generation},
  author={Min, Sewon and Krishna, Kalpesh and Lyu, Xinxi and Lewis, Mike and Yih, Wen-tau and Koh, Pang and Iyyer, Mohit and Zettlemoyer, Luke and Hajishirzi, Hannaneh},
  booktitle={Proceedings of the 2023 Conference on Empirical Methods in Natural Language Processing},
  pages={12076--12100},
  year={2023}
}

@article{reddy2019coqa,
  title={Coqa: A conversational question answering challenge},
  author={Reddy, Siva and Chen, Danqi and Manning, Christopher D},
  journal={Transactions of the Association for Computational Linguistics},
  volume={7},
  pages={249--266},
  year={2019},
  publisher={MIT Press One Rogers Street, Cambridge, MA 02142-1209, USA journals-info~…}
}

@inproceedings{rajpurkar2016squad,
  title={Squad: 100,000+ questions for machine comprehension of text},
  author={Rajpurkar, Pranav and Zhang, Jian and Lopyrev, Konstantin and Liang, Percy},
  booktitle={Proceedings of the 2016 conference on empirical methods in natural language processing},
  pages={2383--2392},
  year={2016}
}

@article{hendrycks2021measuring,
  title={Measuring mathematical problem solving with the math dataset},
  author={Hendrycks, Dan and Burns, Collin and Kadavath, Saurav and Arora, Akul and Basart, Steven and Tang, Eric and Song, Dawn and Steinhardt, Jacob},
  journal={arXiv preprint arXiv:2103.03874},
  year={2021}
}

@article{hermann2015teaching,
  title={Teaching machines to read and comprehend},
  author={Hermann, Karl Moritz and Kocisky, Tomas and Grefenstette, Edward and Espeholt, Lasse and Kay, Will and Suleyman, Mustafa and Blunsom, Phil},
  journal={Advances in neural information processing systems},
  volume={28},
  year={2015}
}

@inproceedings{nallapati2016abstractive,
  title={Abstractive text summarization using sequence-to-sequence rnns and beyond},
  author={Nallapati, Ramesh and Zhou, Bowen and Dos Santos, Cicero and Gul{\c{c}}ehre, {\c{C}}a{\u{g}}lar and Xiang, Bing},
  booktitle={Proceedings of the 20th SIGNLL conference on computational natural language learning},
  pages={280--290},
  year={2016}
}

@inproceedings{lin2004rouge,
  title={Rouge: A package for automatic evaluation of summaries},
  author={Lin, Chin-Yew},
  booktitle={Text summarization branches out},
  pages={74--81},
  year={2004}
}
\bibliographystyle{icml2026}

\appendix

\section{Black-box Experiment and Full Results}
\label{app:blackbox_full}
We employed two representative black-box models, GPT-4o-mini and DeepSeek-chat, alongside three typical medium-sized open-source models as white-box teachers: Llama-2-7b, Llama-3.1-8b, and Qwen-3-8b. As enterprise-grade models with strong capabilities, GPT-4o-mini and DeepSeek-chat exhibit low hallucination rates on simpler datasets such as CoQA and SQuAD, making these datasets inadequate for effective evaluation. Therefore, in our black-box experiments, we selected three more challenging datasets that better reflect real-world user queries: SciQ, NQ-open, and Math500.

It is worth noting that for Math500, where reference answers are concise numerical values while model outputs contain complex reasoning chains, we employed an LLM-as-Judge framework using GPT-4o-mini for hallucination labeling instead of using ROUGE and RoBERTa. Following the teacher-forcing approach described in the main text, we conducted three sets of experiments using 10, 15, and 20 responses, with results shown in Tables~\ref{tab:blackbox_full}, \ref{tab:blackbox_full_v2}, and \ref{tab:blackbox_full_v3}, respectively.

Our experimental results demonstrate that this approach achieves competitive AUROC scores across all configurations, ranking first in the majority of dataset-model combinations and consistently placing in the top-2 for nearly all cases, validating its effectiveness and superiority over baseline methods in this black-box context. These results further establish the practical value and industrial applicability of our method.

\begin{table*}[t]
\centering
\caption{Black-box hallucination detection with teacher forcing, 10 samples (AUROC; higher is better). Best and second-best per row are marked by \textbf{bold} and \underline{underline}.}
\label{tab:blackbox_full}
\scriptsize
\setlength{\tabcolsep}{3pt}
\renewcommand{\arraystretch}{1.0}

\resizebox{\textwidth}{!}{%
\begin{tabular}{lllcccccc}
\toprule
Black model & Teacher model & Dataset & \textbf{EigenWD M1} & ER & ES & LNE & LS & DSE \\
\midrule

\multirow{9}{*}{chatgpt-4o-mini}
& \multirow{3}{*}{Llama-2-7b}
& SciQ
& 0.6032
& \textbf{0.6382}
& \underline{0.6252}
& 0.6223
& 0.6145
& 0.6160 \\

& & NQ-open
& \textbf{0.6414}
& 0.6350
& 0.5680
& \underline{0.6385}
& 0.6276
& 0.5803 \\

& & Math500
& \underline{0.7876}
& \textbf{0.8099}
& 0.6307
& 0.6449
& \underline{0.7795}
& 0.5686 \\
\noalign{\vskip 2.0pt}
\cline{2-9}
\noalign{\vskip 2.0pt}

& \multirow{3}{*}{Llama-3.1-8b}
& SciQ
& \textbf{0.6251}
& 0.6108
& 0.6148
& 0.6038
& 0.6145
& \underline{0.6160} \\

& & NQ-open
& \textbf{0.6303}
& 0.6263
& 0.6205
& 0.6207
& \underline{0.6276}
& 0.5803 \\

& & Math500
& \textbf{0.8329}
& 0.7556
& 0.6794
& 0.6176
& \underline{0.7795}
& 0.5686 \\
\noalign{\vskip 2.0pt}
\cline{2-9}
\noalign{\vskip 2.0pt}

& \multirow{3}{*}{Qwen3-8B}
& SciQ
& \underline{0.6228}
& 0.6196
& 0.6154
& \textbf{0.6537}
& 0.6145
& 0.6160 \\

& & NQ-open
& \textbf{0.6613}
& \underline{0.6284}
& 0.6251
& 0.6240
& 0.6276
& 0.5803 \\

& & Math500
& 0.6803
& \textbf{0.7809}
& 0.6207
& 0.6595
& \underline{0.7795}
& 0.5686 \\
\midrule
\multirow{9}{*}{deepseek-chat}
& \multirow{3}{*}{Llama-2-7b}
& SciQ
& \textbf{0.6480}
& \underline{0.6398}
& 0.5784
& 0.5916
& 0.4921
& 0.5978 \\

& & NQ-open
& \textbf{0.7861}
& 0.6797
& 0.6088
& 0.6585
& 0.6804
& \underline{0.7114} \\

& & Math500
& \textbf{0.8889}
& \underline{0.8760}
& 0.5587
& 0.5344
& \underline{0.8335}
& 0.5997 \\
\noalign{\vskip 2.0pt}
\cline{2-9}
\noalign{\vskip 2.0pt}

& \multirow{3}{*}{Llama-3.1-8b}
& SciQ
& \textbf{0.6392}
& 0.5988
& \underline{0.6314}
& 0.5782
& 0.4921
& 0.5978 \\

& & NQ-open
& \underline{0.7648}
& 0.7366
& \textbf{0.7670}
& 0.5454
& 0.6804
& 0.7114 \\

& & Math500
& \textbf{0.8738}
& 0.8174
& 0.6381
& 0.5470
& \underline{0.8335}
& 0.5997 \\
\noalign{\vskip 2.0pt}
\cline{2-9}
\noalign{\vskip 2.0pt}

& \multirow{3}{*}{Qwen3-8B}
& SciQ
& \underline{0.6102}
& 0.6090
& \textbf{0.6534}
& 0.5927
& 0.4921
& 0.5978 \\

& & NQ-open
& \textbf{0.7675}
& 0.7427
& \underline{0.7583}
& 0.5083
& 0.6804
& 0.7114\\

& & Math500
& 0.7465
& \underline{0.8014}
& 0.5803
& 0.4952
& \textbf{0.8335}
& 0.5997 \\
\bottomrule
\end{tabular}
}
\end{table*}

\begin{table*}[t]
\centering
\caption{Black-box hallucination detection with teacher forcing, 15 samples (AUROC; higher is better). Best and second-best per row are marked by \textbf{bold} and \underline{underline}.}
\label{tab:blackbox_full_v2}
\scriptsize
\setlength{\tabcolsep}{3pt}
\renewcommand{\arraystretch}{1.00}

\resizebox{\textwidth}{!}{%
\begin{tabular}{lllcccccc}
\toprule
Black model & Teacher model & Dataset & \textbf{EigenWD M1} & ER & ES & LNE & LS & DSE \\
\midrule

\multirow{9}{*}{chatgpt-4o-mini}
& \multirow{3}{*}{Llama-2-7b}
& SciQ
& \underline{0.6302}
& \textbf{0.6409}
& 0.6074
& 0.6154
& 0.6075
& 0.6172 \\

& & NQ-open
& \textbf{0.6368}
& \underline{0.6338}
& 0.5677
& 0.6024
& 0.6199
& 0.6097 \\

& & Math500
& \textbf{0.8652}
& \underline{0.8049}
& 0.6534
& 0.6389
& 0.7723
& 0.6071 \\
\noalign{\vskip 2.0pt}
\cline{2-9}
\noalign{\vskip 2.0pt}

& \multirow{3}{*}{Llama-3.1-8b}
& SciQ
& 0.6280
& \underline{0.6330}
& \textbf{0.6406}
& 0.5862
& 0.6075
& 0.6172 \\

& & NQ-open
& \textbf{0.6344}
& 0.6300
& 0.6238
& 0.6060
& 0.6199
& 0.6097 \\

& & Math500
& \textbf{0.8387}
& 0.7660
& 0.6993
& 0.6306
& 0.7723
& 0.6071 \\
\noalign{\vskip 2.0pt}
\cline{2-9}
\noalign{\vskip 2.0pt}

& \multirow{3}{*}{Qwen3-8B}
& SciQ
& \textbf{0.6444}
& 0.6386
& \underline{0.6432}
& 0.6350
& 0.6075
& 0.6172 \\

& & NQ-open
& \textbf{0.6557}
& \underline{0.6335}
& 0.6286
& 0.5838
& 0.6199
& 0.6097 \\

& & Math500
& 0.6868
& \textbf{0.7736}
& 0.6600
& 0.6592
& \underline{0.7723}
& 0.6071 \\
\midrule
\multirow{9}{*}{deepseek-chat}
& \multirow{3}{*}{Llama-2-7b}
& SciQ
& \textbf{0.6773}
& \underline{0.6389}
& 0.5586
& 0.5769
& 0.4605
& 0.6285 \\

& & NQ-open
& \textbf{0.7837}
& 0.6895
& 0.5004
& 0.6335
& 0.6932
& \underline{0.7264} \\

& & Math500
& \underline{0.8750}
& \textbf{0.8919}
& 0.6492
& 0.5806
& 0.8109
& 0.5689 \\
\noalign{\vskip 2.0pt}
\cline{2-9}
\noalign{\vskip 2.0pt}

& \multirow{3}{*}{Llama-3.1-8b}
& SciQ
& \textbf{0.6588}
& 0.6046
& \underline{0.6445}
& 0.5494
& 0.4605
& 0.6285 \\

& & NQ-open
& \textbf{0.7921}
& 0.7456
& \underline{0.7603}
& 0.5235
& 0.6932
& \underline{0.7264} \\

& & Math500
& \textbf{0.8314}
& \underline{0.8216}
& 0.6987
& 0.5710
& 0.8109
& 0.5689 \\
\noalign{\vskip 2.0pt}
\cline{2-9}
\noalign{\vskip 2.0pt}

& \multirow{3}{*}{Qwen3-8B}
& SciQ
& \textbf{0.6479}
& 0.5835
& \underline{0.6435}
& 0.5767
& 0.4605
& 0.6285 \\

& & NQ-open
& \textbf{0.7904}
& 0.7481
& \underline{0.7614}
& 0.5110
& 0.6932
& 0.7264 \\

& & Math500
& 0.7436
& \underline{0.8600}
& 0.6820
& 0.5077
& \textbf{0.8109}
& 0.5689 \\
\bottomrule
\end{tabular}
}
\end{table*}

\begin{table*}[t]
\centering
\caption{Black-box hallucination detection with teacher forcing, 20 samples (AUROC; higher is better). Best and second-best per row are marked by \textbf{bold} and \underline{underline}.}
\label{tab:blackbox_full_v3}
\scriptsize
\setlength{\tabcolsep}{3pt}
\renewcommand{\arraystretch}{1.00}

\resizebox{\textwidth}{!}{%
\begin{tabular}{lllcccccc}
\toprule
Black model & Teacher model & Dataset & \textbf{EigenWD M1} & ER & ES & LNE & LS & DSE \\
\midrule

\multirow{9}{*}{chatgpt-4o-mini}
& \multirow{3}{*}{Llama-2-7b}
& SciQ
& \underline{0.6192}
& \textbf{0.6412}
& 0.6073
& 0.6092
& 0.6134
& 0.6167 \\

& & NQ-open
& \textbf{0.6465}
& \underline{0.6321}
& 0.5520
& 0.6043
& 0.6323
& 0.6130 \\

& & Math500
& \textbf{0.8567}
& \underline{0.8035}
& 0.6540
& 0.6648
& \underline{0.7911}
& 0.6010 \\
\noalign{\vskip 2.0pt}
\cline{2-9}
\noalign{\vskip 2.0pt}

& \multirow{3}{*}{Llama-3.1-8b}
& SciQ
& 0.6204
& \underline{0.6429}
& \textbf{0.6481}
& 0.5876
& 0.6134
& 0.6167 \\

& & NQ-open
& \textbf{0.6356}
& \underline{0.6338}
& 0.6250
& 0.6084
& 0.6323
& 0.6130 \\

& & Math500
& \textbf{0.8645}
& 0.7548
& 0.6841
& 0.6057
& \underline{0.7911}
& 0.6010 \\
\noalign{\vskip 2.0pt}
\cline{2-9}
\noalign{\vskip 2.0pt}

& \multirow{3}{*}{Qwen3-8B}
& SciQ
& 0.6199
& \underline{0.6434}
& \textbf{0.6502}
& 0.6398
& 0.6134
& 0.6167 \\

& & NQ-open
& \textbf{0.6401}
& 0.6300
& \underline{0.6326}
& 0.5884
& 0.6323
& 0.6130 \\

& & Math500
& 0.7281
& \textbf{0.7968}
& 0.6822
& 0.6688
& \underline{0.7911}
& 0.6010 \\
\midrule
\multirow{9}{*}{deepseek-chat}
& \multirow{3}{*}{Llama-2-7b}
& SciQ
& \textbf{0.6823}
& \underline{0.6442}
& 0.5413
& 0.5759
& 0.4744
& 0.6408 \\

& & NQ-open
& \textbf{0.7758}
& 0.6818
& 0.5423
& 0.6342
& 0.6998
& \underline{0.7285} \\

& & Math500
& \underline{0.8544}
& \textbf{0.9006}
& 0.6888
& 0.5685
& 0.8491
& 0.5754 \\
\noalign{\vskip 2.0pt}
\cline{2-9}
\noalign{\vskip 2.0pt}

& \multirow{3}{*}{Llama-3.1-8b}
& SciQ
& \textbf{0.6647}
& 0.6152
& \underline{0.6456}
& 0.5500
& 0.4744
& 0.6408 \\

& & NQ-open
& \textbf{0.7904}
& 0.7422
& \underline{0.7556}
& 0.5291
& 0.6998
& \underline{0.7285} \\

& & Math500
& \textbf{0.8501}
& 0.8442
& 0.7216
& 0.5829
& \underline{0.8491}
& 0.5754 \\
\noalign{\vskip 2.0pt}
\cline{2-9}
\noalign{\vskip 2.0pt}

& \multirow{3}{*}{Qwen3-8B}
& SciQ
& \underline{0.6442}
& 0.6016
& \textbf{0.6528}
& 0.5802
& 0.4744
& 0.6408 \\

& & NQ-open
& \textbf{0.7878}
& 0.7454
& \underline{0.7601}
& 0.5196
& 0.6998
& 0.7285 \\

& & Math500
& 0.7348
& \textbf{0.8672}
& 0.6975
& 0.5084
& \underline{0.8491}
& 0.5754 \\
\bottomrule
\end{tabular}
}
\end{table*}

\section{Proofs for Robustness Lemmas}
\label{app:stability_proofs}

\subsection{Proof of Lemma 1}
\begin{proof}
By the triangle inequality for the Wasserstein distance,
\begin{align}
W_2(\mu,\nu)
&\le W_2(\mu,\mu') + W_2(\mu',\nu') + W_2(\nu',\nu), \\
W_2(\mu',\nu')
&\le W_2(\mu',\mu) + W_2(\mu,\nu) + W_2(\nu,\nu').
\end{align}
Rearranging the two inequalities gives
\begin{equation}
\bigl|W_2(\mu,\nu)-W_2(\mu',\nu')\bigr|
\le
W_2(\mu,\mu') + W_2(\nu,\nu').
\end{equation}
\end{proof}

\subsection{Proof of Lemma 2}
\begin{proof}
Let $m$ be the number of tokens. Consider the coupling
\begin{equation}
\pi \;=\; \frac{1}{m}\sum_{t=1}^{m}\delta_{(\mathbf{z}_t,\mathbf{z}'_t)},
\end{equation}
which matches each token $\mathbf{z}_t$ to $\mathbf{z}'_t$ with mass $1/m$.
This is a valid transport plan between the uniform empirical measures
$\mu(\mathbf{Z})$ and $\mu(\mathbf{Z}')$. Therefore,
\begin{align}
W_2^2\!\bigl(\mu(\mathbf{Z}),\mu(\mathbf{Z}')\bigr)
&\le \int \|u-v\|_2^2 \, d\pi(u,v) \notag\\
&= \frac{1}{m}\sum_{t=1}^{m}\|\mathbf{z}_t-\mathbf{z}'_t\|_2^2 \notag\\
&= \frac{\|\mathbf{Z}-\mathbf{Z}'\|_F^2}{m}.
\label{eq:token_w2_proof_chain}
\end{align}
Taking square roots yields \Cref{eq:token_w2}.
\end{proof}

\subsection{Proof of Theorem 1}
\begin{proof}
By \Cref{lem:two_sided_w2} with $(\mu,\nu)=(\mu_i,\mu_j)$ and $(\mu',\nu')=(\mu_i',\mu_j')$,
\begin{equation}
|D_{ij}-D'_{ij}|
\le
W_2(\mu_i,\mu_i') + W_2(\mu_j,\mu_j').
\end{equation}
By \Cref{lem:token_w2},
\begin{equation}
W_2(\mu_i,\mu_i')\le \varepsilon_i,
\qquad
W_2(\mu_j,\mu_j')\le \varepsilon_j,
\end{equation}
hence
\begin{equation}
|D_{ij}-D'_{ij}|\le \varepsilon_i+\varepsilon_j.
\end{equation}
Averaging over all unordered pairs and using the definition of AvgWD, together with
$\sum_{1\le i<j\le K}(\varepsilon_i+\varepsilon_j)=(K-1)\sum_{i=1}^K \varepsilon_i$,
gives
\begin{equation}
\bigl|\mathrm{AvgWD}(\mathbf{Z}_{1:K})-\mathrm{AvgWD}(\mathbf{Z}'_{1:K})\bigr|
\le \frac{2}{K}\sum_{i=1}^K \varepsilon_i.
\end{equation}
\end{proof}

\subsection{A stability statement for EigenWD (optional)}
For completeness, we record a Lipschitz-type bound for EigenWD under boundedness assumptions. Let $\mathbf{D},\mathbf{D}'\in\mathbb{R}^{K\times K}$ be two symmetric distance matrices with $D_{ii}=D'_{ii}=0$, and define
\begin{equation}
K(\mathbf{D})_{ij}=\exp\!\left(-\frac{D_{ij}^2}{2(b^2+\epsilon)}\right)+\alpha\,\mathbf{1}[i=j],
\end{equation}
with fixed $b,\epsilon,\alpha>0$.

\begin{lemma}[EigenWD is locally Lipschitz in $\mathbf{D}$]
\label{lem:eigenwd_lipschitz}
Assume $\max_{i,j}\{D_{ij},D'_{ij}\}\le R$ for some $R>0$, and let $\boldsymbol{\lambda}(\mathbf{D})$ denote the eigenvalues of $K(\mathbf{D})$. Define
\begin{equation}
\mathrm{EigenWD}(\mathbf{D})=\frac{\|\boldsymbol{\lambda}(\mathbf{D})\|_p}{\|\boldsymbol{\lambda}(\mathbf{D})\|_2}
\qquad \text{for } p\in(0,2).
\end{equation}
Then there exists a constant $C=C(K,p,\alpha,b,\epsilon,R)$ such that
\begin{equation}
\bigl|\mathrm{EigenWD}(\mathbf{D})-\mathrm{EigenWD}(\mathbf{D}')\bigr|
\;\le\;
C\,\|\mathbf{D}-\mathbf{D}'\|_F.
\end{equation}
\end{lemma}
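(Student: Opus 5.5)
The plan is to write $\mathrm{EigenWD}$ as a composition of three maps and show each is Lipschitz on the relevant domain:
\[
\mathbf{D}\;\mapsto\;K(\mathbf{D})\;\mapsto\;\boldsymbol{\lambda}(\mathbf{D})\;\mapsto\;\frac{\|\boldsymbol{\lambda}\|_p}{\|\boldsymbol{\lambda}\|_2}.
\]
The constant $C$ is then the product of the three Lipschitz constants.

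\textbf{Step 1 (kernelization).} Set $s=b^2+\epsilon$ and $g(x)=\exp(-x^2/(2s))$. Then $|g'(x)|=\frac{x}{s}e^{-x^2/(2s)}\le \min\{R/s,\;1/\sqrt{es}\}$ for $x\in[0,R]$. Apply the mean value theorem entrywise; the diagonal shift $\alpha\mathbf{I}$ cancels in the difference. This gives
\[
\|K(\mathbf{D})-K(\mathbf{D}')\|_F\le L_1\|\mathbf{D}-\mathbf{D}'\|_F,\qquad L_1=(es)^{-1/2}.
\]
$R$ is not needed here, but it may enter the constant if one prefers the cruder bound $R/s$.

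\textbf{Step 2 (spectrum).} Both kernel matrices are symmetric. List the eigenvalues of each in nonincreasing order; the ratio is a symmetric function of $\boldsymbol{\lambda}$, so the ordering is harmless. The Hoffman--Wielandt inequality (or Weyl's inequality, at the cost of a factor $\sqrt{K}$) then gives $\|\boldsymbol{\lambda}(\mathbf{D})-\boldsymbol{\lambda}(\mathbf{D}')\|_2\le\|K(\mathbf{D})-K(\mathbf{D}')\|_F$. For the upper confinement, Gershgorin gives $\lambda_i\le 1+\alpha+(K-1)=K+\alpha$, since all kernel entries lie in $(0,1]$.

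\textbf{Step 3 (ratio of quasi-norms) --- the main obstacle.} The map $F(\boldsymbol{\lambda})=\big(\sum_i\lambda_i^p\big)^{1/p}/\big(\sum_i\lambda_i^2\big)^{1/2}$ is smooth only where all $\lambda_i>0$. For $p<1$ its gradient blows up like $\lambda_i^{p-1}$ as $\lambda_i\to0$, so a uniform positive lower bound on the spectrum is essential. The natural route is to argue that the Gaussian part $G(\mathbf{D})$ is positive semidefinite, so that $\lambda_{\min}(K)\ge\alpha$. This holds when $\mathbf{D}$ is conditionally negative definite, for instance Euclidean-embeddable distances, by Schoenberg's theorem. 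It is not automatic for Wasserstein distance matrices in high dimension, however. I therefore expect to add this as part of the ``mild boundedness assumptions'' of the Remark: either assume $G$ is positive semidefinite for both $\mathbf{D}$ and $\mathbf{D}'$, or assume more generally $\lambda_{\min}\ge c>0$. Negative eigenvalues can alternatively be handled by reading $\lambda_i^p$ as $|\lambda_i|^p$, which still requires the spectrum to stay away from $0$.

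Given the confinement $\boldsymbol{\lambda}\in[c,K+\alpha]^K$ (with $c=\alpha$ in the positive semidefinite case), the proof finishes as follows.
\begin{itemize}
\item The denominator satisfies $\|\boldsymbol{\lambda}\|_2\ge\sqrt{K}\,c>0$.
\item The box is convex, so the segment between $\boldsymbol{\lambda}(\mathbf{D})$ and $\boldsymbol{\lambda}(\mathbf{D}')$ stays in it, and the mean value theorem applies along that segment.
\item $\nabla F$ is continuous on this compact box, so it is bounded there by an explicit $L_3(K,p,c,\alpha)$, obtained from the quotient rule using $\lambda_i^{p-1}\le c^{p-1}$ and the denominator bound.
\end{itemize}
Combining the three steps yields $C=L_1L_3$, which depends only on $(K,p,\alpha,b,\epsilon)$ (and on $R$ if the cruder $L_1$ is used), as claimed.
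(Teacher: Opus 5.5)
Your decomposition is the paper's own: an entrywise Lipschitz bound for the Gaussian map, Hoffman--Wielandt for the sorted spectra, and local Lipschitzness of $\|\boldsymbol{\lambda}\|_p/\|\boldsymbol{\lambda}\|_2$ on a compact box bounded away from zero. In Step 1 your constant $(e(b^2+\epsilon))^{-1/2}$ is sharper and $R$-free, where the paper uses $R/(b^2+\epsilon)$.

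The real difference is Step 3. The paper asserts that ``the diagonal shift implies $K(\mathbf{D})\succeq\alpha I$'' and then works on the box $\lambda_i\ge\alpha$. You point out that this needs the Gaussian part $G(\mathbf{D})$ to be positive semidefinite. You are right, and the hypothesis is necessary: without it the lemma as stated is false for $p\in(0,1)$. Your argument is therefore a correct proof of the corrected lemma, and the paper's proof has exactly the gap you located.

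Here is a concrete counterexample. Take $K=4$ and the star metric: a center $0$ with $D_{0i}=r$, and leaves $i,j\in\{1,2,3\}$ with $D_{ij}=2r$ for $i\neq j$. This is a genuine metric. Put $u=e^{-r^2/(b^2+\epsilon)}$. Then $G$ has center--leaf entries $u^{1/2}$ and leaf--leaf entries $u^2$. On $\mathrm{span}\{e_0,(0,1,1,1)\}$ it acts as the $2\times2$ matrix with rows $(1,\sqrt{3u})$ and $(\sqrt{3u},1+2u^2)$. Its determinant is $(2u-1)(u-1)<0$ for $u\in(\tfrac12,1)$, and its smaller eigenvalue is $1+u^2-\sqrt{u^4+3u}\approx-0.04$ at $u=0.7$.

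Hence, for any $\alpha<0.04$ (the implementation uses a small $\alpha$) and $R$ large enough to contain this configuration, a simple eigenvalue of $K(\mathbf{D}_r)$ crosses $0$ with nonzero speed at some $r^*$. The other eigenvalues stay positive there. Read $\lambda_i^p$ as $|\lambda_i|^p$; with the literal reading, EigenWD is not even real-valued past $r^*$. Then $|\mathrm{EigenWD}(\mathbf{D}_r)-\mathrm{EigenWD}(\mathbf{D}_{r^*})|$ is of order $|r-r^*|^p$, while $\|\mathbf{D}_r-\mathbf{D}_{r^*}\|_F=\sqrt{30}\,|r-r^*|$. So no finite $C$ exists.

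Two smaller corrections to your write-up:
\begin{itemize}
\item The denominator never needs positivity. For symmetric $K(\mathbf{D})$ we have $\|\boldsymbol{\lambda}(\mathbf{D})\|_2=\|K(\mathbf{D})\|_F\ge\sqrt{K}(1+\alpha)$ from the diagonal alone. Consequently, for $p\in[1,2)$ (with $|\lambda_i|^p$) the lemma holds with no extra hypothesis: $\|\cdot\|_p$ is then a norm and $\bigl|\|\boldsymbol{\lambda}\|_p-\|\boldsymbol{\mu}\|_p\bigr|\le K^{1/p-1/2}\|\boldsymbol{\lambda}-\boldsymbol{\mu}\|_2$. Your remark that the absolute-value reading ``still requires the spectrum to stay away from $0$'' is true only for $p<1$.
\item Schoenberg's condition for $G(\mathbf{D})\succeq0$ at every bandwidth is that the \emph{squared}-distance matrix $(D_{ij}^2)$ be conditionally negative definite, equivalently that $\mathbf{D}$ be Euclidean-embeddable. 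Conditional negative definiteness of $\mathbf{D}$ itself does not suffice.
\end{itemize}
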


\begin{proof}
First, define the entrywise map
\begin{equation}
g(x) \;=\; \exp\!\left(-\frac{x^2}{2(b^2+\epsilon)}\right).
\end{equation}
It is Lipschitz on $[0,R]$ with constant
\begin{align}
L_g
&= \max_{x\in[0,R]}\left|\frac{d}{dx}g(x)\right|
= \max_{x\in[0,R]}\frac{x}{b^2+\epsilon}\exp\!\left(-\frac{x^2}{2(b^2+\epsilon)}\right)\notag\\
&\le \frac{R}{b^2+\epsilon}.
\end{align}
Hence,
\begin{equation}
\|K(\mathbf{D})-K(\mathbf{D}')\|_F \le L_g\,\|\mathbf{D}-\mathbf{D}'\|_F.
\end{equation}

Second, by Hoffman--Wielandt inequality for symmetric matrices,
\begin{equation}
\|\boldsymbol{\lambda}(\mathbf{D})-\boldsymbol{\lambda}(\mathbf{D}')\|_2
\le \|K(\mathbf{D})-K(\mathbf{D}')\|_F
\le L_g\,\|\mathbf{D}-\mathbf{D}'\|_F.
\end{equation}

Third, the diagonal shift implies $K(\mathbf{D})\succeq \alpha I$, hence
$\|\boldsymbol{\lambda}(\mathbf{D})\|_2 \ge \sqrt{K}\alpha$ (and similarly for $\mathbf{D}'$).

Finally, for $p\in(0,2)$, the map
\begin{equation}
f(\boldsymbol{\lambda})=\frac{\|\boldsymbol{\lambda}\|_p}{\|\boldsymbol{\lambda}\|_2}
\end{equation}
is locally Lipschitz on the compact set induced by the above bounds, so
\begin{align}
|f(\boldsymbol{\lambda}(\mathbf{D}))-f(\boldsymbol{\lambda}(\mathbf{D}'))|
&\le C\,\|\boldsymbol{\lambda}(\mathbf{D})-\boldsymbol{\lambda}(\mathbf{D}')\|_2 \notag\\
&\le C' L_g\,\|\mathbf{D}-\mathbf{D}'\|_F.
\label{eq:eigenwd_lip_final}
\end{align}
This completes the proof.
\end{proof}

% =========================
% Appendix C: Ablation (t vs N separated)
% =========================
% -------------------------
\section{Additional Ablation Results}
\label{app:ablation}

\subsection{Ablation on Sampling Hyperparameters}
\label{app:ablation_sampling}
We provide full ablation tables for (i) sampling temperature $\tau$ and (ii) the number of stochastic generations $K$.
These hyperparameters directly control the diversity of samples drawn from $p_\theta(\cdot\mid x)$ and therefore affect the observed \emph{sample transform costs} used by AvgWD/EigenWD.
Unless otherwise stated, we keep the decoding configuration fixed (including top-$k$ / top-$\rho$ truncation) and only vary the target factor.
For each setting we report AUROC on each dataset and the mean across datasets (row ``Average'').

\subsection{Qwen3-8B}
\label{app:qwen_ablation}
Tables~\ref{tab:appendixC_qwen_t} and~\ref{tab:appendixC_qwen_n} report ablations on Qwen3-8B.
Overall, AvgWD tends to be stronger than EigenWD on Qwen models (consistent with the main results), indicating that the dominant signal is often the \emph{magnitude} of sample transform costs rather than a highly fragmented cost structure.

\paragraph{Temperature $\tau$.}
As shown in Table~\ref{tab:appendixC_qwen_t}, detection performance is generally non-monotonic in $\tau$.
Very low temperature reduces sample diversity and can weaken the transform-cost signal, while overly high temperature introduces excessive randomness that dilutes factual inconsistency cues.
In our experiments, moderate temperatures (e.g., $\tau\in[0.5,0.7]$) typically yield the best average performance across datasets, aligning with the intuition that an informative estimate of distribution complexity requires neither near-deterministic nor overly noisy sampling.

\paragraph{Number of generations $K$.}
Table~\ref{tab:appendixC_qwen_n} presents experiment results with $K \in \{10, 15, 20\}$.
Increasing $K$ provides a richer empirical characterization of $p_\theta(\cdot\mid x)$ and can improve stability of both AvgWD and EigenWD.
On Qwen3-8B, the gains with larger $K$ are present but moderate on average, suggesting that Qwen’s sample transform costs are already relatively informative at $K{=}10$ while additional samples mainly refine the estimate.

\paragraph{Practical recommendation.}
For Qwen3-8B, we recommend $\tau\approx 0.5$--$0.7$ with $K\ge 10$ as a good default trade-off between detection performance and inference cost.

\begin{table*}[t]
\centering
\caption{Ablation on sampling temperature $t$ for Qwen3-8B (AUROC; higher is better). Best and second-best per row are marked by \textbf{bold} and \underline{underline}.}
\label{tab:appendixC_qwen_t}
\scriptsize
\setlength{\tabcolsep}{3pt}
\renewcommand{\arraystretch}{0.95}
\resizebox{\textwidth}{!}{%
\begin{tabular}{llccccccccc}
\toprule
$t$ & Dataset & AvgWD M1 & AvgWD L1 & EigenWD M1 & EigenWD L1 & ER & ES & DSE & LNE & LS \\
\midrule
\multirow{5}{*}{0.1} & CoQA & 0.642 & 0.633 & \textbf{0.645} & \underline{0.643} & 0.630 & 0.620 & 0.635 & 0.596 & 0.631 \\
 & SQuAD & \textbf{0.577} & 0.561 & \textbf{0.577} & \textbf{0.577} & \underline{0.565} & 0.451 & 0.563 & 0.555 & 0.556 \\
 & MATH-500 & \underline{0.682} & 0.671 & \textbf{0.685} & \textbf{0.685} & 0.671 & 0.665 & 0.668 & 0.613 & 0.628 \\
 & CNN/DailyMail & \textbf{0.648} & 0.640 & 0.633 & 0.632 & \underline{0.641} & 0.629 & 0.619 & 0.588 & 0.607 \\
 \midrule
 & \textbf{Average} & \textbf{0.637} & 0.626 & \underline{0.635} & 0.634 & 0.627 & 0.591 & 0.621 & 0.588 & 0.606 \\
\midrule
\multirow{5}{*}{0.3} & CoQA & 0.669 & 0.659 & \textbf{0.673} & \underline{0.670} & 0.648 & 0.668 & \underline{0.670} & 0.572 & 0.667 \\
 & SQuAD & \textbf{0.616} & 0.611 & \underline{0.613} & 0.611 & 0.595 & 0.547 & \underline{0.613} & 0.600 & 0.611 \\
 & MATH-500 & \underline{0.707} & 0.698 & 0.699 & 0.699 & 0.701 & 0.683 & \textbf{0.718} & 0.638 & 0.677 \\
 & CNN/DailyMail & \underline{0.664} & 0.651 & 0.651 & 0.651 & \textbf{0.669} & 0.643 & 0.642 & 0.609 & 0.621 \\
 \midrule
 & \textbf{Average} & \textbf{0.664} & 0.655 & 0.659 & 0.658 & 0.653 & 0.635 & \underline{0.661} & 0.605 & 0.644 \\
\midrule
\multirow{5}{*}{0.5} & CoQA & \textbf{0.742} & 0.704 & 0.734 & 0.728 & 0.735 & 0.733 & 0.733 & 0.557 & \underline{0.738} \\
 & SQuAD & \textbf{0.656} & 0.642 & 0.645 & 0.640 & 0.626 & 0.627 & \underline{0.647} & 0.617 & \underline{0.647} \\
 & MATH-500 & \underline{0.726} & 0.703 & 0.712 & 0.712 & 0.715 & 0.718 & \textbf{0.729} & 0.656 & 0.689 \\
 & CNN/DailyMail & \textbf{0.679} & 0.643 & 0.668 & 0.668 & \underline{0.675} & 0.668 & 0.654 & 0.615 & 0.645 \\
 \midrule
 & \textbf{Average} & \textbf{0.701} & 0.673 & 0.690 & 0.687 & 0.688 & 0.686 & \underline{0.691} & 0.611 & 0.680 \\
\midrule
\multirow{5}{*}{0.7} & CoQA & \textbf{0.749} & 0.731 & 0.738 & 0.736 & 0.735 & 0.732 & 0.728 & 0.590 & \underline{0.747} \\
 & SQuAD & \textbf{0.678} & 0.662 & \underline{0.669} & 0.666 & 0.656 & 0.658 & 0.667 & 0.579 & 0.665 \\
 & MATH-500 & \underline{0.706} & 0.700 & 0.694 & 0.691 & 0.705 & 0.701 & \textbf{0.719} & 0.664 & 0.679 \\
 & CNN/DailyMail & \underline{0.680} & 0.665 & 0.662 & 0.660 & \textbf{0.682} & 0.661 & 0.638 & 0.597 & 0.666 \\
 \midrule
 & \textbf{Average} & \textbf{0.703} & 0.690 & 0.691 & 0.688 & \underline{0.695} & 0.688 & 0.688 & 0.608 & 0.689 \\
\midrule
\multirow{5}{*}{0.9} & CoQA & 0.720 & 0.705 & \underline{0.731} & 0.730 & 0.718 & 0.718 & 0.719 & 0.592 & \textbf{0.739} \\
 & SQuAD & 0.658 & 0.650 & \underline{0.669} & 0.667 & 0.659 & 0.657 & 0.660 & 0.548 & \textbf{0.675} \\
 & MATH-500 & \underline{0.685} & 0.671 & 0.682 & 0.680 & 0.683 & 0.667 & \textbf{0.699} & 0.653 & 0.666 \\
 & CNN/DailyMail & 0.647 & 0.647 & \textbf{0.650} & \underline{0.648} & 0.647 & 0.638 & 0.641 & 0.619 & 0.633 \\
 \midrule
 & \textbf{Average} & 0.678 & 0.668 & \textbf{0.683} & \underline{0.681} & 0.677 & 0.670 & 0.680 & 0.603 & 0.678 \\
\bottomrule
\end{tabular}%
}
\end{table*}

\begin{table*}[t]
\centering
\caption{Ablation on number of generations $N$ for Qwen3-8B (AUROC; higher is better). Best and second-best per row are marked by \textbf{bold} and \underline{underline}.}
\label{tab:appendixC_qwen_n}
\scriptsize
\setlength{\tabcolsep}{3pt}
\renewcommand{\arraystretch}{0.95}
\resizebox{\textwidth}{!}{%
\begin{tabular}{llccccccccc}
\toprule
$N$ & Dataset & AvgWD M1 & AvgWD L1 & EigenWD M1 & EigenWD L1 & ER & ES & DSE & LNE & LS \\
\midrule
\multirow{5}{*}{10} & CoQA & \textbf{0.742} & 0.704 & 0.734 & 0.728 & 0.735 & 0.733 & 0.733 & 0.557 & \underline{0.738} \\
 & SQuAD & \textbf{0.656} & 0.642 & 0.645 & 0.640 & 0.626 & 0.627 & \underline{0.647} & 0.617 & \underline{0.647} \\
 & MATH-500 & \underline{0.726} & 0.703 & 0.712 & 0.712 & 0.715 & 0.718 & \textbf{0.729} & 0.656 & 0.689 \\
 & CNN/DailyMail & \textbf{0.679} & 0.643 & 0.668 & 0.668 & \underline{0.675} & 0.668 & 0.654 & 0.615 & 0.645 \\
 \midrule
 & \textbf{Average} & \textbf{0.701} & 0.673 & 0.690 & 0.687 & 0.688 & 0.686 & \underline{0.691} & 0.611 & 0.680 \\
\midrule
\multirow{5}{*}{15} & CoQA & \textbf{0.733} & 0.693 & 0.723 & 0.720 & 0.725 & 0.727 & 0.725 & 0.545 & \underline{0.728} \\
 & SQuAD & \textbf{0.670} & 0.656 & 0.653 & 0.652 & \underline{0.662} & 0.627 & 0.655 & 0.616 & 0.658 \\
 & MATH-500 & \underline{0.727} & 0.717 & 0.715 & 0.714 & 0.720 & 0.725 & \textbf{0.731} & 0.662 & 0.713 \\
 & CNN/DailyMail & \textbf{0.685} & 0.644 & 0.674 & 0.671 & 0.679 & \underline{0.683} & 0.671 & 0.643 & 0.650 \\
 \midrule
 & \textbf{Average} & \textbf{0.704} & 0.678 & 0.691 & 0.689 & \underline{0.697} & 0.691 & 0.696 & 0.617 & 0.687 \\
\midrule
\multirow{5}{*}{20} & CoQA & 0.737 & 0.698 & \textbf{0.738} & \underline{0.737} & 0.729 & 0.729 & 0.733 & 0.561 & 0.736 \\
 & SQuAD & \textbf{0.670} & 0.648 & 0.650 & 0.649 & 0.656 & 0.627 & 0.619 & 0.613 & \underline{0.667} \\
 & MATH-500 & \underline{0.728} & 0.718 & 0.717 & 0.717 & 0.724 & 0.713 & \textbf{0.737} & 0.681 & 0.719 \\
 & CNN/DailyMail & \textbf{0.685} & 0.661 & 0.680 & 0.676 & \underline{0.684} & \underline{0.684} & 0.680 & 0.658 & 0.660 \\
 \midrule
 & \textbf{Average} & \textbf{0.705} & 0.681 & 0.696 & 0.695 & \underline{0.698} & 0.688 & 0.692 & 0.628 & 0.696 \\
\bottomrule
\end{tabular}%
}
\end{table*}

\subsection{Llama-3.1-8B}
\label{app:llama_ablation}
Tables~\ref{tab:appendixC_llama_t} and~\ref{tab:appendixC_llama_n} report ablations on Llama-3.1-8B.
Compared with Qwen, EigenWD is more consistently strong on Llama models, supporting the main claim that \emph{cost-structure complexity} can be a reliable indicator when sampled responses form multi-modal inconsistency patterns.

\paragraph{Temperature $\tau$.}
Table~\ref{tab:appendixC_llama_t} shows a clear dependence on $\tau$.
Moderate temperatures often produce the best average AUROC, while extreme values degrade performance.
This matches the distribution-complexity view: if $\tau$ is too small, samples collapse and the transform-cost matrix becomes less informative; if $\tau$ is too large, costs are affected by uncontrolled randomness rather than factual inconsistency.
Across settings, EigenWD frequently remains competitive, suggesting that the spectrum of the kernelized cost matrix captures robust structural information even when sampling conditions vary.

\paragraph{Number of generations $K$.}
Table~\ref{tab:appendixC_llama_n} varies $K$ and generally shows that larger $K$ improves the reliability of transform-cost estimation.
Notably, EigenWD benefits more from additional samples than AvgWD in several cases, consistent with the intuition that estimating \emph{structural complexity} (spectrum concentration/dispersion) requires enough samples to expose multiple modes of inconsistency.

\paragraph{Practical recommendation.}
For Llama-3.1-8B, we recommend using $K\ge 15$ when feasible and a moderate temperature (e.g., $\tau\approx 0.3$--$0.7$).
This setting better reveals multi-sample inconsistency patterns and strengthens the spectral signal used by EigenWD.

\begin{table*}[t]
\centering
\caption{Ablation on sampling temperature $t$ for Llama-3.1-8B (AUROC; higher is better). Best and second-best per row are marked by \textbf{bold} and \underline{underline}.}
\label{tab:appendixC_llama_t}
\scriptsize
\setlength{\tabcolsep}{3pt}
\renewcommand{\arraystretch}{0.95}
\resizebox{\textwidth}{!}{%
\begin{tabular}{llccccccccc}
\toprule
$t$ & Dataset & AvgWD M1 & AvgWD L1 & EigenWD M1 & EigenWD L1 & ER & ES & DSE & LNE & LS \\
\midrule
\multirow{5}{*}{0.1} & CoQA & \underline{0.722} & \textbf{0.726} & \textbf{0.726} & \textbf{0.726} & 0.693 & 0.717 & 0.713 & 0.587 & 0.713 \\
 & SQuAD & 0.727 & 0.729 & \textbf{0.758} & \textbf{0.758} & \underline{0.740} & 0.725 & 0.710 & 0.618 & 0.707 \\
 & MATH-500 & 0.665 & 0.661 & \underline{0.682} & \underline{0.682} & \textbf{0.687} & 0.669 & 0.676 & 0.598 & 0.659 \\
 & CNN/DailyMail & 0.596 & \underline{0.600} & \textbf{0.601} & \textbf{0.601} & 0.598 & 0.568 & 0.597 & 0.529 & 0.573 \\
 \midrule
 & \textbf{Average} & 0.678 & 0.679 & \textbf{0.692} & \textbf{0.692} & \underline{0.680} & 0.670 & 0.674 & 0.583 & 0.663 \\
\midrule
\multirow{5}{*}{0.3} & CoQA & 0.775 & \underline{0.776} & \textbf{0.790} & \textbf{0.790} & 0.775 & 0.744 & 0.774 & 0.594 & 0.772 \\
 & SQuAD & 0.774 & 0.771 & \underline{0.799} & \textbf{0.800} & 0.770 & 0.792 & 0.762 & 0.620 & 0.733 \\
 & MATH-500 & 0.704 & 0.702 & \textbf{0.708} & \textbf{0.708} & 0.702 & 0.685 & \underline{0.705} & 0.601 & 0.685 \\
 & CNN/DailyMail & 0.615 & \underline{0.619} & \textbf{0.620} & \textbf{0.620} & 0.618 & 0.588 & 0.611 & 0.583 & 0.592 \\
 \midrule
 & \textbf{Average} & 0.717 & 0.717 & \underline{0.729} & \textbf{0.730} & 0.716 & 0.700 & 0.713 & 0.600 & 0.696 \\
\midrule
\multirow{5}{*}{0.5} & CoQA & \underline{0.771} & \underline{0.771} & \textbf{0.789} & \textbf{0.789} & 0.729 & 0.712 & 0.761 & 0.618 & 0.769 \\
 & SQuAD & \textbf{0.817} & \underline{0.815} & 0.799 & 0.799 & 0.767 & 0.785 & 0.769 & 0.613 & 0.734 \\
 & MATH-500 & 0.713 & 0.713 & \underline{0.722} & \underline{0.722} & 0.705 & 0.709 & \textbf{0.726} & 0.603 & 0.711 \\
 & CNN/DailyMail & \underline{0.625}  & 0.622 & \textbf{0.629} & \textbf{0.629} & 0.624 & 0.620 & 0.615 & 0.548 & 0.605 \\
 \midrule
 & \textbf{Average} & \underline{0.732} & 0.730 & \textbf{0.735} & \textbf{0.735} & 0.706 & 0.707 & 0.718 & 0.596 & 0.705 \\
\midrule
\multirow{5}{*}{0.7} & CoQA & 0.762 & 0.761 & \textbf{0.799} & \textbf{0.799} & 0.739 & 0.635 & 0.641 & 0.641 & \underline{0.790} \\
 & SQuAD & \underline{0.774} & 0.771 & \textbf{0.791} & \textbf{0.791} & 0.766 & 0.720 & 0.717 & 0.652 & 0.707 \\
 & MATH-500 & 0.705 & 0.708 & \underline{0.712} & \underline{0.712} & \textbf{0.717} & 0.695 & 0.711 & 0.649 & 0.690 \\
 & CNN/DailyMail & 0.617 & 0.615 & \underline{0.619} & \underline{0.619} & \textbf{0.620} & 0.593 & 0.602 & 0.557 & 0.575 \\
 \midrule
 & \textbf{Average} & \underline{0.715} & 0.714 & \textbf{0.730} & \textbf{0.730} & 0.711 & 0.661 & 0.668 & 0.625 & 0.691 \\
\midrule
\multirow{5}{*}{0.9} & CoQA & 0.723 & 0.722 & \underline{0.750} & \textbf{0.751} & \underline{0.750} & 0.634 & 0.615 & 0.633 & 0.746 \\
 & SQuAD & 0.715 & 0.713 & \textbf{0.738} & \textbf{0.738} & \underline{0.718} & 0.653 & 0.685 & 0.651 & 0.630 \\
 & MATH-500 & \underline{0.685} & 0.682 & 0.683 & 0.683 & \textbf{0.696} & 0.671 & 0.674 & 0.633 & 0.650 \\
 & CNN/DailyMail & 0.593 & 0.594 & \underline{0.602} & \underline{0.602} & \textbf{0.610} & 0.583 & 0.577 & 0.552 & 0.560 \\
 \midrule
 & \textbf{Average} & 0.679 & 0.678 & \underline{0.693} & \textbf{0.694} & \textbf{0.694} & 0.635 & 0.638 & 0.617 & 0.647 \\
\bottomrule
\end{tabular}%
}
\end{table*}

\begin{table*}[t]
\centering
\caption{Ablation on number of generations $N$ for Llama-3.1-8B (AUROC; higher is better). Best and second-best per row are marked by \textbf{bold} and \underline{underline}.}
\label{tab:appendixC_llama_n}
\scriptsize
\setlength{\tabcolsep}{3pt}
\renewcommand{\arraystretch}{0.95}
\resizebox{\textwidth}{!}{%
\begin{tabular}{llccccccccc}
\toprule
$N$ & Dataset & AvgWD M1 & AvgWD L1 & EigenWD M1 & EigenWD L1 & ER & ES & DSE & LNE & LS \\
\midrule
\multirow{5}{*}{10} & CoQA & \underline{0.771} & \underline{0.771} & \textbf{0.789} & \textbf{0.789} & 0.729 & 0.712 & 0.761 & 0.618 & 0.769 \\
 & SQuAD & \textbf{0.817} & \underline{0.815} & 0.799 & 0.799 & 0.767 & 0.785 & 0.769 & 0.613 & 0.734 \\
 & MATH-500 & 0.713 & 0.713 & \underline{0.722} & \underline{0.722} & 0.705 & 0.709 & \textbf{0.726} & 0.603 & 0.711 \\
 & CNN/DailyMail & \underline{0.625} & 0.622 & \textbf{0.629} & \textbf{0.629} & 0.624 & 0.620 & 0.615 & 0.548 & 0.605 \\
 \midrule
 & \textbf{Average} & \underline{0.732} & 0.730 & \textbf{0.735} & \textbf{0.735} & 0.706 & 0.707 & 0.718 & 0.596 & 0.705 \\
\midrule
\multirow{5}{*}{15} & CoQA & 0.775 & 0.775 & \textbf{0.801} & \textbf{0.801} & 0.735 & 0.754 & 0.771 & 0.605 & \underline{0.788} \\
 & SQuAD & \textbf{0.817} & \underline{0.812} & 0.799 & 0.799 & 0.793 & 0.785 & 0.784 & 0.689 & 0.782 \\
 & MATH-500 & 0.720 & 0.716 & 0.727 & \underline{0.728} & 0.715 & 0.705 & \textbf{0.731} & 0.692 & 0.700 \\
 & CNN/DailyMail & 0.627 & 0.627 & \textbf{0.631} & \textbf{0.631} & \underline{0.630} & 0.626 & 0.619 & 0.596 & 0.617 \\
 \midrule
 & \textbf{Average} & \underline{0.735} & 0.733 & \textbf{0.740} & \textbf{0.740} & 0.718 & 0.718 & 0.726 & 0.646 & 0.722 \\
\midrule
\multirow{5}{*}{20} & CoQA & 0.791 & 0.790 & \underline{0.813} & \textbf{0.814} & 0.751 & 0.790 & 0.780 & 0.600 & 0.792 \\
 & SQuAD & \underline{0.810} & \textbf{0.811} & 0.803 & 0.803 & 0.769 & 0.771 & 0.785 & 0.699 & 0.790 \\
 & MATH-500 & 0.721 & 0.721 & \textbf{0.728} & \textbf{0.728} & 0.708 & 0.707 & \underline{0.725} & 0.694 & 0.711 \\
 & CNN/DailyMail & 0.627 & 0.627 & \underline{0.631} & \underline{0.631} & 0.630 & 0.628 & 0.625 & 0.610 & \textbf{0.632} \\
 \midrule
 & \textbf{Average} & \underline{0.737} & \underline{0.737} & \textbf{0.744} & \textbf{0.744} & 0.715 & 0.724 & 0.729 & 0.651 & 0.731 \\
\bottomrule
\end{tabular}%
}
\end{table*}

\paragraph{EigenWD and the choice of the numerator order $p$.}
\begin{figure}[h!]
    \centering
    \includegraphics[width=0.8\linewidth]{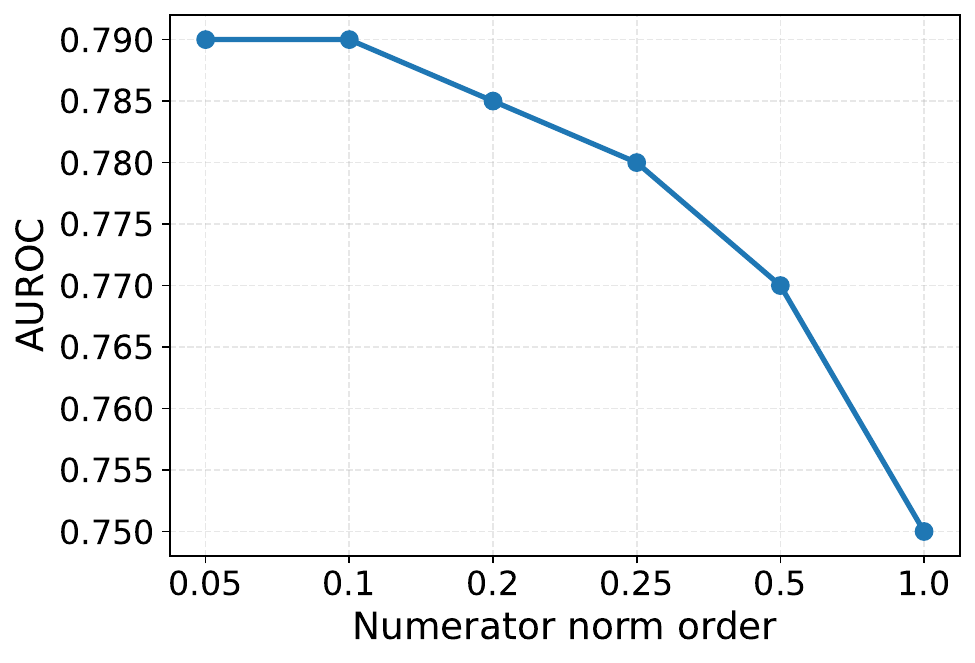}
    \caption{Ablation on the numerator order $p$ in EigenWD.}
    \label{fig:eigenwd_p_ablation}
\end{figure}
Motivated by the view in our abstract that hallucinations correlate with the \emph{complexity} of the conditional response distribution induced by a prompt, we quantify not only the average transform cost (AvgWD) but also the \emph{cost-structure complexity} across multiple sampled responses via a spectral statistic. Specifically, given the pairwise Wasserstein distance matrix $D\in\mathbb{R}^{k\times k}$ computed between token-embedding empirical measures, we kernelize it into an affinity matrix $K$ (Gaussian kernel with a median bandwidth and diagonal stabilization) and define
\begin{equation}
\mathrm{EigenWD}(x) \;=\; \frac{\|s\|_{p}}{\|s\|_{2}}, \qquad s=\sigma(K),
\label{eq:eigenwd2}
\end{equation}
where $\sigma(K)$ denotes the singular values (equivalently eigenvalues for PSD $K$) and $p\in(0,2)$ controls the sensitivity to spectral dispersion.
Let $\pi_i=s_i^2/\|s\|_2^2$ so that $\sum_i \pi_i=1$; then $\mathrm{EigenWD}(x)=\big(\sum_i \pi_i^{p/2}\big)^{1/p}$ depends only on the \emph{shape} of the spectrum (scale-invariant) and increases when the spectrum is less concentrated (i.e., many non-negligible modes remain), which corresponds to a more fragmented/multi-modal transform-cost structure across sampled responses.
Importantly, smaller $p$ makes Eq.~\eqref{eq:eigenwd2} more \emph{rank-/dispersion-sensitive}: for near-low-rank $K$ (highly consistent samples) the score stays close to its lower bound, while for spectrally spread $K$ (divergent samples) the score expands with a larger dynamic range, improving separability.
This behavior is consistent with our ablation in Fig.~\ref{fig:eigenwd_p_ablation} on CoQA, where decreasing $p$ monotonically improves AUROC (e.g., $p{=}1$ performs worst and $p\le 0.25$ is consistently better), indicating that hallucination risk is primarily reflected by \emph{how many} cost-consistency modes are activated rather than only the overall scale of costs. In all main experiments we use a small $p$ (default $p{=}0.1$) as a robust choice that captures this spectral complexity signal without additional training.

\end{document}